\documentclass{style} 

\usepackage[T1]{fontenc}
\usepackage{lmodern}
\usepackage{microtype}
\usepackage[utf8]{inputenc}

\usepackage{setspace}
\usepackage{booktabs}
\usepackage{multirow}
\usepackage{makecell}
\usepackage{tabularx}
\usepackage{titletoc}

\usepackage{titlesec}

\usepackage{amsmath,amsfonts,bm}

\def\eqref#1{equation~\ref{#1}}

\def\1{\bm{1}}

\DeclareMathAlphabet{\mathsfit}{\encodingdefault}{\sfdefault}{m}{sl}
\SetMathAlphabet{\mathsfit}{bold}{\encodingdefault}{\sfdefault}{bx}{n}

\usepackage{graphicx} 
\usepackage{booktabs}
\usepackage[most]{tcolorbox}
\usepackage{enumitem}
\usepackage{listings}
\usepackage{multirow}
\usepackage{wrapfig}
\usepackage{xcolor}
\usepackage[authoryear,round]{natbib}
\usepackage{amsthm}
\usepackage{fvextra}
\usepackage[most]{tcolorbox}
\tcbuselibrary{skins,breakable}
\usepackage[scaled=1.10]{inconsolata}
\usepackage{subcaption}
\fvset{fontsize=\scriptsize, breaklines=true, breakanywhere=true,
       breaksymbolleft={}}

\definecolor{warmbg}{HTML}{FFF8F2}
\definecolor{warmtitle}{HTML}{E8D2BF}
\definecolor{warmframe}{HTML}{CFA98B}

\newtcolorbox{rollout}[1]{%
  enhanced jigsaw,
  breakable,
  colback=warmbg,
  colframe=warmframe,
  boxrule=0.4pt,
  arc=1pt,
  left=4pt,
  right=4pt,
  top=3pt,
  bottom=3pt,
  title={\footnotesize\bfseries #1},
  coltitle=black,
  colbacktitle=warmtitle,
}
\theoremstyle{plain}
\newtheorem{proposition}{Proposition}

\definecolor{tokBlue}{HTML}{2F5D8A}     
\definecolor{tokOrange}{HTML}{C2702A}   
\definecolor{tokGreen}{HTML}{2E7D5B}    
\definecolor{tokPurple}{HTML}{7B4F9D}   
\definecolor{tokBrown}{HTML}{8C6D3F}    
\definecolor{tokRed}{HTML}{B03A48}      
\definecolor{tokOlive}{HTML}{6E8B3D}    
\definecolor{stageShade}{gray}{0.95}    

\newcommand{\ckpt}[2]{\textcolor{#1}{#2}}
\newcommand{\tok}[3]{\textcolor{#1}{\texttt{#2}\,\small(#3)}}
\definecolor{linkblue}{HTML}{547A9E}

\newtcblisting{promptbox}[1]{
    enhanced,
    breakable,
    listing only,
    title={#1},
    colback=warmbg,
    colframe=warmframe,
    coltitle=black,
    colbacktitle=warmtitle,
    fonttitle=\bfseries,
    boxrule=0.6pt,
    arc=1.5mm,
    left=2mm,
    right=2mm,
    top=1mm,
    bottom=1mm,
    listing options={
        basicstyle=\ttfamily\small,
        breaklines=true,
        columns=fullflexible,
        keepspaces=true,
        showstringspaces=false
    }
}
\usepackage{url}
\usepackage{hyperref}
\usepackage[nameinlink,capitalise]{cleveref}
\title{When EOS Tokens Disagree: Understanding Length Inflation in On-Policy Distillation}

\author[1,*]{Yuxiao Yang}
\author[2,*]{Tianrun Yu}
\author[1,*]{Shangzhe Li}
\author[2,*]{Kaixiang Zhao}
\author[3]{Xuchao Zhang}
\author[3]{Chetan Bansal}
\author[1]{Huaxiu Yao}
\author[2,\dagger]{Taylor W. Killian}
\author[1,\ddagger]{Weitong Zhang}

\contribution[*]{Equal contribution.}
\contribution[\dagger]{\texttt{tkillian@cs.byu.edu}}
\contribution[\ddagger]{\texttt{weitongz@unc.edu}}

\affiliation[1]{University of North Carolina at Chapel Hill}
\affiliation[2]{Brigham Young University}
\affiliation[3]{Microsoft}

\begin{document}

\abstract{
\vspace{-.7em}
We study length inflation in on-policy distillation (OPD), where student responses can become excessively long and even exhaust the generation budget. We identify \emph{termination-token mismatch} between base students and post-trained teachers as an important source of this behavior. Across Qwen3, Llama, and Gemma, the two models can place their stopping probability on different EOS tokens, even when their declared stopping sets are identical. This mismatch can suppress the student's preferred termination action without reliably transferring the teacher-preferred alternative. We show that aligning the decoding stopping set alone is insufficient, while treating functionally equivalent EOS tokens as a shared semantic stopping action substantially mitigates mismatch-induced length inflation across all three model families. To further understand how termination behavior evolves over training, we study OPD across different K2-Horizon training stages. This stage-wise analysis shows that termination preferences can shift substantially during training, while also revealing a distinct length inflation late in the OPD run that persists beyond termination alignment. Together, these results identify termination mismatch as an important, but not exhaustive, source of OPD length dynamics. We release an implementation incorporating the proposed termination-handling corrections.

\code{https://uncsciml.github.io/opd-eos-website} 
}

\maketitle

\section{Introduction}

On-policy distillation (OPD) trains a student using its own rollouts with dense token-level supervision from a teacher model \citep{lu2025onpolicydistillation}. It has recently attracted growing attention as a practical framework for transferring strong post-trained
behavior to smaller or less capable models
\citep{EOPD,EXOPD,li2026rethinking}.
Despite its effectiveness, several recent studies have reported undesired length inflation in student responses, often accompanied by truncation, repetition, or other generation pathologies \citep{demisify}.
We observe the same phenomenon across several model families, using pretrained base students and post-trained teachers in single-turn mathematical reasoning. Student responses become progressively longer during training, and in severe cases a large fraction of rollouts
exhaust the generation budget. We refer to this behavior as \emph{length inflation}. \Cref{fig:length_inflation_overview} illustrates a
Qwen3 example in which the student reaches the correct answer early but continues generating repetitive or redundant suffixes instead of terminating. Such increases in response length need not reflect more
extensive reasoning; they can instead arise from a failure to stop. Existing studies have proposed different explanations and mitigation strategies for this phenomenon, ranging from objective-level effects of reverse-KL distillation to rollout degradation and other training instabilities \citep{fu2026revisiting,simpleOPD,demisify,
wang2026demystifyingonpolicydistillationroles,zhao2026decoupling}.

In this work, however, we find that a direct cause of this inflation in the configurations studied is \emph{termination-token mismatch} between base
students and post-trained teachers. We observe this mismatch across Qwen3, Llama, and Gemma. The key issue is that the student and teacher may represent
the same semantic decision to terminate using different EOS tokens. This mismatch can arise not only when their declared decoding stopping sets differ, but also when the two models assign their learned stopping probability to different tokens within the same declared set. Since the teacher only evaluates the tokens generated by students in OPD, a student-generated EOS token receives a negative distillation signal if teacher model is actively using an alternative EOS token. The student's existing termination action can therefore be repeatedly suppressed without reliably transferring the teacher's termination token with minor ($\sim10^{-11}$ in Qwen3) in student model.

To address this mismatch, we first show that aligning the decoding interface alone is insufficient. We then compare three corrections that align termination in the distillation signal and find that treating functionally
equivalent EOS tokens as a shared semantic stopping action substantially mitigates mismatch-induced length inflation across Qwen3, Llama, and Gemma in single-turn mathematical reasoning.

\begin{figure}[t]
    \centering

    \begin{tcolorbox}[
        enhanced,
        colback=gray!3,
        colframe=gray!12,
        boxrule=0.3pt,
        arc=1.5mm,
        left=2mm,
        right=2mm,
        top=2mm,
        bottom=2mm
    ]

    \begin{subfigure}{\linewidth}
        \centering
        \includegraphics[width=\linewidth]{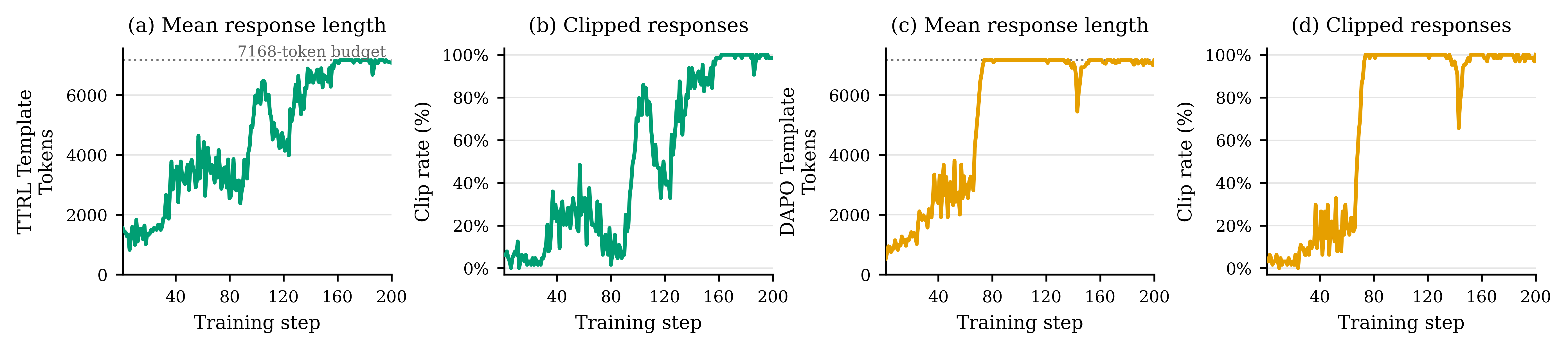}
        \caption{Evolution of response length and length-clipping rate during training.}
        \label{fig:baseline_opd}
    \end{subfigure}

    \vspace{0.5em}

    \begin{subfigure}{\linewidth}
        \centering
        \includegraphics[width=\linewidth]{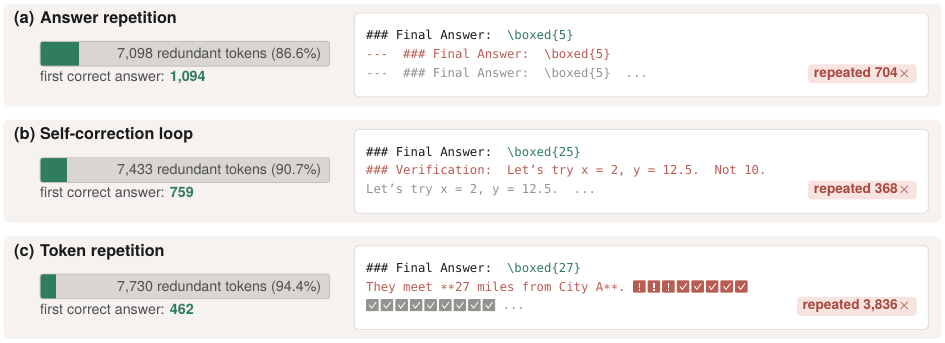}
        \caption{Qualitative examples of failure to terminate.}
        \label{fig:demo}
    \end{subfigure}

    \end{tcolorbox}

    \caption{
    \textbf{Length inflation under vanilla OPD, illustrated with Qwen3.}
    (a) When distilling a post-trained Qwen3 teacher into a Qwen3 base
    student, response length increase and hits the generation budget during training under both TTRL and DAPO
    prompts.
    (b) Inflated responses often contain the correct answer early but
    fail to terminate, consuming much of the remaining generation budget
    with repetitive or redundant continuations.
    }
    \label{fig:length_inflation_overview}
\end{figure}

To further understand how this termination mismatch evolves over training, we leverage the intermediate K2-Horizon~\cite{} checkpoints spanning pretraining, midtraining, supervised fine-tuning, and final post-training. This stage-wise analysis shows how learned termination preferences shift across training stages, but also reveals a distinct inflation late in the OPD run:
even after the teacher-preferred termination form has been transferred, response length can increase again and termination probability can collapse. This behavior is not explained by the EOS mismatch alone and points to
additional mechanisms governing OPD length dynamics. We also observe that prompt-template choices can further modulate response length and measured performance.

In short, beyond the mechanisms considered in prior work, we identify termination-token mismatch between base and post-trained models as a major and readily diagnosable source of length inflation. Our contributions are summarized as follows:

\begin{itemize}[
    labelindent=0pt,
    leftmargin=*,
    itemsep=-0.2em,
]
\item We identify termination-token mismatch as an important mechanism of length inflation in sampled-token OPD. Across Qwen3, Llama, and Gemma, we show that student and teacher checkpoints can represent the same semantic stopping decision through different learned token preferences, even when their declared EOS sets are identical.

\item We systematically compare four termination-handling strategies on Qwen3 and show that aligning the decoding stopping set alone is insufficient. Corrections that align termination semantics in the distillation signal substantially mitigate mismatch-induced inflation, and we validate semantic EOS aggregation across Qwen3, Llama, and Gemma.

\item We extend the analysis across post-training stages using K2-Horizon, showing how learned termination preferences evolve during training and
revealing a distinct length inflation late in the OPD run from the pretraining checkpoint (Phase~3 in \Cref{fig:k2_stage_alignment}), which remains after termination alignment. We also document sensitivity to prompt and evaluation protocols, highlighting these choices as an additional source of variation in reported OPD behavior.

\item We release an implementation that includes the termination-handling corrections and the evaluation protocols.

\end{itemize}

\section{Experimental Setup}
\label{sec:setup}
\label{sec:preliminaries}

\noindent\textbf{Sampled-token OPD and notation.}
On-policy distillation trains a student $\pi_\theta$ on its own rollouts
with a fixed teacher $\pi^E$. At a prefix $s_t=(x,y_{<t})$, the sampled
token $y_t$ receives the signal
\[
A_t=\log\pi^E(y_t\mid s_t)-\log\pi_\theta(y_t\mid s_t),
\qquad
g_t=A_t\nabla_\theta\log\pi_\theta(y_t\mid s_t).
\]
We treat $A_t$ as detached when we compute the score-function update, and we
average the contributions of all valid generated tokens without reward-to-go.
We refer to this formulation as sampled-token OPD, in which the teacher
scores only the token that the student has sampled. We write
$\mathcal{E}_{\mathrm{EOS}}$ for the set of termination tokens that are
equivalent under the rollout protocol of a given model pair, and
$e_\star\in\mathcal{E}_{\mathrm{EOS}}$ for the termination token that the
base student supports natively. The corrections in
\Cref{sec:length_inflation} all act on these two objects. Our experiments
build on the public OPD implementation of \citet{li2026rethinking}.

\newcommand{\familyfour}[1]{%
    \multirow[c]{4}{*}[-2.0em]{#1}%
}
\newcommand{\familythree}[1]{%
    \multirow[c]{3}{*}[-2.8em]{#1}%
}
\newcommand{\stagetwo}[1]{%
    \multirow[c]{2}{*}[-0.7em]{#1}%
}
\begin{table}[t]
    \centering
    \caption{
    \textbf{EOS tokens of model studied.}
    We report the \texttt{eos\_token\_id} specified in each model's
    \texttt{generation\_config.json}. Models sharing the same declared
    EOS token ID set are shown in the same color; token IDs are given in
    parentheses. For \texttt{IFM/K2-Horizon-7B}, revision names identify
    the training-stage models used in our experiments.
    Its EOS token strings were renamed after pretraining, while their
    IDs and the declared stopping set remained unchanged.
    }
    \label{tab:eos_configs}

    \footnotesize
    \setlength{\tabcolsep}{3.5pt}
    \renewcommand{\arraystretch}{1.08}

    \begin{tabularx}{\linewidth}{
        @{}
        >{\raggedright\arraybackslash}m{0.075\linewidth}
        >{\raggedright\arraybackslash}m{0.105\linewidth}
        >{\raggedright\arraybackslash}p{0.135\linewidth}
        >{\raggedright\arraybackslash}X
        >{\raggedright\arraybackslash}p{0.285\linewidth}
        @{}
    }
        \toprule
        \textbf{Family} &
        \textbf{Stage} &
        \textbf{Version} &
        \textbf{Checkpoint(s)} &
        \textbf{EOS token(s)} \\
        \midrule

        \familyfour{Qwen}
        & \stagetwo{Base}
        & Qwen2.5-Math
        & \makecell[l]{
            \ckpt{tokBlue}{Qwen2.5-Math-1.5B}\\[-1pt]
            \ckpt{tokBlue}{Qwen2.5-Math-7B}
          }
        & \multirow[c]{2}{*}{
            \tok{tokBlue}{<|endoftext|>}{151643}
          } \\

        &
        & Qwen3
        & \makecell[l]{
            \ckpt{tokBlue}{Qwen3-1.7B-Base}\\[-1pt]
            \ckpt{tokBlue}{Qwen3-4B-Base}
          }
        & \\

        \cmidrule(lr){2-5}

        &
        \stagetwo{Post-trained}
        & Qwen2.5-Math
        & \makecell[l]{
            \ckpt{tokOrange}{Qwen2.5-Math-1.5B-Instruct}\\[-1pt]
            \ckpt{tokOrange}{Qwen2.5-Math-7B-Instruct}
          }
        & \multirow[c]{2}{*}{
            \textcolor{tokOrange}{
                \makecell[l]{
                    \texttt{<|im\_end|>, <|endoftext|>}\\[-1pt]
                    \texttt{(151645, 151643)}
                }
            }
          } \\

        &
        & Qwen3
        & \makecell[l]{
            \ckpt{tokOrange}{Qwen3-1.7B}\\[-1pt]
            \ckpt{tokOrange}{Qwen3-4B}
          }
        & \\

        \midrule

        \familyfour{Llama}
        & \stagetwo{Base}
        & Llama 3.1
        & \makecell[l]{
            \ckpt{tokGreen}{Llama-3.1-8B}\\[-1pt]
            \ckpt{tokGreen}{Llama-3.1-70B}
          }
        & \multirow[c]{2}{*}{
            \tok{tokGreen}{<|end\_of\_text|>}{128001}
          } \\

        &
        & Llama 3.2
        & \makecell[l]{
            \ckpt{tokGreen}{Llama-3.2-1B}\\[-1pt]
            \ckpt{tokGreen}{Llama-3.2-3B}
          }
        & \\

        \cmidrule(lr){2-5}

        &
        \stagetwo{Post-trained}
        & Llama 3.1
        & \makecell[l]{
            \ckpt{tokPurple}{Llama-3.1-8B-Instruct}\\[-1pt]
            \ckpt{tokPurple}{Llama-3.1-70B-Instruct}
          }
        & \multirow[c]{2}{*}{
            \textcolor{tokPurple}{
                \makecell[l]{
                    \texttt{<|end\_of\_text|>, <|eom\_id|>,}\\[-1pt]
                    \texttt{<|eot\_id|>}\\[-1pt]
                    \texttt{(128001, 128008, 128009)}
                }
            }
          } \\

        &
        & Llama 3.2
        & \makecell[l]{
            \ckpt{tokPurple}{Llama-3.2-1B-Instruct}\\[-1pt]
            \ckpt{tokPurple}{Llama-3.2-3B-Instruct}
          }
        & \\

        \midrule

        \familyfour{Gemma}
        & \stagetwo{Base}
        & Gemma 2
        & \ckpt{tokBrown}{gemma-2-2b}
        & \tok{tokBrown}{<eos>}{1} \\

        &
        & Gemma 3
        & \makecell[l]{
            \ckpt{tokOlive}{gemma-3-1b-pt}\\[-1pt]
            \ckpt{tokOlive}{gemma-3-4b-pt}
          }
        & \textcolor{tokOlive}{
            \makecell[l]{
                \texttt{<eos>, <end\_of\_turn>}\\[-1pt]
                \texttt{(1, 106)}
            }
          } \\

        \cmidrule(lr){2-5}

        &
        \stagetwo{Post-trained}
        & Gemma 2
        & \ckpt{tokRed}{gemma-2-2b-it}
        & \textcolor{tokRed}{
            \makecell[l]{
                \texttt{<eos>, <end\_of\_turn>}\\[-1pt]
                \texttt{(1, 107)}
            }
          } \\

        &
        & Gemma 3
        & \makecell[l]{
            \ckpt{tokOlive}{gemma-3-1b-it}\\[-1pt]
            \ckpt{tokOlive}{gemma-3-4b-it}
          }
        & \textcolor{tokOlive}{
            \makecell[l]{
                \texttt{<eos>, <end\_of\_turn>}\\[-1pt]
                \texttt{(1, 106)}
            }
          } \\

        \midrule

        \familythree{K2-Horizon}
        & Base
        & K2-Horizon
        & \makecell[l]{
            \ckpt{teal}{K2-Horizon-7B}\\[-1pt]
            \textcolor{teal}{\texttt{pretrain\_final}}
          }
        & \textcolor{teal}{
            \makecell[l]{
                \texttt{<|endoftext|>, <|im\_end|>}\\[-1pt]
                \texttt{(1, 250019)}
            }
          } \\

        \cmidrule(lr){2-5}

        & Mid-training
        & K2-Horizon
        & \makecell[l]{
            \ckpt{teal}{K2-Horizon-7B}\\[-1pt]
            \textcolor{teal}{\texttt{mid\_1\_final}}
          }
        & \textcolor{teal}{
            \makecell[l]{
                \texttt{<|ifm|endoftext|>,}\\[-1pt]
                \texttt{<|ifm|im\_end|>}\\[-1pt]
                \texttt{(1, 250019)}
            }
          } \\

        \cmidrule(lr){2-5}

        & Post-trained
        & K2-Horizon
        & \makecell[l]{
            \ckpt{teal}{K2-Horizon-7B}\\[-1pt]
            \textcolor{teal}{\texttt{sft\_1\_final}}\\[-1pt]
            \textcolor{teal}{\texttt{main}}
          }
        & \textcolor{teal}{
            \makecell[l]{
                \texttt{<|ifm|endoftext|>,}\\[-1pt]
                \texttt{<|ifm|im\_end|>}\\[-1pt]
                \texttt{(1, 250019)}
            }
          } \\

        \bottomrule
    \end{tabularx}
\end{table}

\noindent\textbf{Models and data.}
Our main pair uses Qwen3-1.7B-Base as the student and Qwen3-4B in
non-thinking mode as the teacher, and we run the full ablation over the four
termination-handling strategies on this pair. To test whether the mechanism
extends beyond Qwen3, we additionally use Llama-3.2-3B Base and Instruct
\citep{grattafiori2024llama} and Gemma-3-4B PT and IT
\citep{team2025gemma}. Within each pair, the student and the teacher share a
tokenizer and a vocabulary. To observe how termination preferences change
during post-training, we also use K2-Horizon-7B, which releases checkpoints
from pretraining, midtraining, supervised fine-tuning, and the final
post-trained model. We fix the final checkpoint as the teacher and
initialize the student from each of the earlier stages. All models are
trained on DAPO-Math-17K \citep{dapo}. Unless otherwise specified, both
training and evaluation use the TTRL template, which requires the final
answer to be enclosed in \texttt{\textbackslash boxed\{\}}. The template
comparisons additionally use the DAPO template and a raw-question template
that provides only the problem statement. Prompt strings and grader details
are provided in \Cref{app:imple_detail}.

\noindent\textbf{Metrics and budgets.}
During training we track the mean response length and the clipping ratio,
which is the fraction of responses that reach the generation-length budget.
For downstream performance we report Avg@16, which samples 16 responses per
problem, averages correctness over the problems in each benchmark, and then
takes the unweighted mean over AMC23 \citep{maa2023amc}, AIME24
\citep{aime24}, and AIME25 \citep{aime25}. To observe termination behavior
directly, we also compare the termination probabilities of the student and
the teacher at the final position of each student rollout, where we report
both the probabilities of individual tokens and the total stopping mass
$q_\pi(h)=\sum_{e\in\mathcal{E}_{\mathrm{EOS}}}\pi(e\mid h)$. These prefixes
include rollouts that end naturally and rollouts that are truncated at the
budget. The total stopping mass measures how strongly a model prefers to
stop at a given prefix, which is a different quantity from the fraction of
rollouts that terminate naturally. In the main experiments, training allows
at most 1{,}024 prompt tokens and 7{,}168 response tokens, while independent
checkpoint evaluation allows at most 8{,}192 generated tokens, and the
settings for each K2-Horizon stage are described with the corresponding experiments.
The teacher reference lines in our figures report the response length and
clipping statistics of teacher generations on the same prompts, which is a
different quantity from the teacher probabilities that we evaluate at
student prefixes. Remaining optimization and sampling settings are provided
in \Cref{app:exp_details}.

\section{Termination Mismatch and Probability-Level Alignment}
\label{sec:length_inflation}
\label{sec:beyond_qwen}

We use Qwen3 to diagnose termination-token mismatch. We first show that
aligning the decoding stopping set does not remove length inflation, then
analyze how the mismatch enters sampled-token supervision, then compare
three corrections that align termination at the probability level, and
finally test the mechanism on Llama 3.2 and Gemma 3.

\subsection{Decoding EOS Alignment Alone Is Insufficient}
\label{sec:causes}

As shown in \Cref{fig:baseline_opd}, vanilla OPD on Qwen3 progressively
increases response length and clipping, and many rollouts eventually exhaust
the generation budget. The qualitative examples in \Cref{fig:demo} indicate
that this behavior is primarily a failure to terminate, because the student
often produces a correct answer well before the end of the response and then
continues with redundant or repetitive text. \Cref{tab:eos_configs} provides
an immediate clue. The Qwen3 base checkpoint declares \texttt{<|endoftext|>}
as its EOS token, whereas the post-trained checkpoint also recognizes
\texttt{<|im\_end|>} as a conversational termination token
\citep{qwen3concepts}. The student and the teacher therefore share a
tokenizer and a vocabulary, but their declared termination conventions
differ. A first hypothesis is that the rollout decoder does not recognize all
termination tokens that are relevant to this pair, in which case aligning the
decoding stopping set would be sufficient.

\noindent\textbf{Fix 1: Shared-set decoding.}
We first test the most direct correction at the decoding level. For the Qwen3
pair, we register both \texttt{<|endoftext|>} and \texttt{<|im\_end|>} as
valid stopping tokens during student rollout, while the OPD objective and the
token probabilities of both models remain unchanged. As shown in
\Cref{fig:fix_cmp}, Fix~1 follows nearly the same response-length and
clipping trajectory as vanilla OPD and does not remove the failure. Aligning
the decoding stopping set is therefore not sufficient, which leads us to
examine how termination is represented in the distillation signal.
\begin{figure}[!t]
    \centering
    \includegraphics[width=\linewidth]{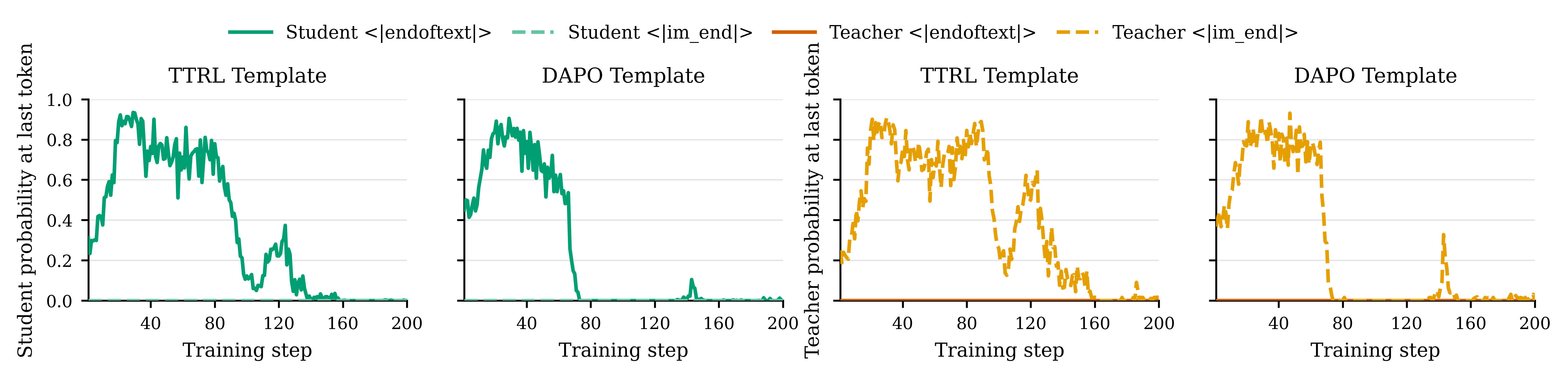}
    \caption{
    \textbf{Qwen3 termination-token probabilities under vanilla OPD.}
    Probabilities are measured at the final generation position of each
    rollout, which either ends naturally or is truncated at the generation
    budget. The student favors \texttt{<|endoftext|>}, while the teacher
    favors \texttt{<|im\_end|>} at the same rollout-end prefixes.
    }
    \label{fig:eos_prob}
\end{figure}

The failure of Fix~1 motivates an examination of the learned distributions
rather than the decoding configuration alone. \Cref{fig:eos_prob} shows a
much stronger mismatch at the probability level. At the same rollout-end
prefixes, the Qwen3 base student places most of its termination probability
on \texttt{<|endoftext|>}, whereas the post-trained teacher favors
\texttt{<|im\_end|>}. The student also assigns very small probability to the
teacher-preferred \texttt{<|im\_end|>} token, approximately $10^{-11}$ around
terminal states. Registering this token as an additional stopping token
therefore has little practical effect, because the student almost never
samples it.

The mismatch also changes the supervision that reaches the termination token
which the student \emph{does} sample. For a termination token $e$ generated
by the student, sampled-token OPD uses
\[
A_t(e)
=
\log \pi^E(e\mid s_t)
-
\log \pi_\theta(e\mid s_t).
\]
This coefficient is negative whenever the teacher assigns less probability to
that particular surface token than the student does, even if the teacher
assigns substantial probability to terminating through another
termination-equivalent token. In this case, sampled-token OPD suppresses the
EOS token that the student sampled instead of recognizing that the teacher
may express the same semantic decision to stop through a different token. For
Qwen3, the student therefore receives a suppressive signal on its native
\texttt{<|endoftext|>} token, and at the same time it receives little direct
supervision on the teacher-preferred \texttt{<|im\_end|>}, which is rarely
sampled. Consistent with this mechanism, the probability that the student
assigns to its native EOS falls from roughly $0.8$ early in training to near
zero in \Cref{fig:eos_prob}, while response length and clipping increase.

The low student probability of the teacher-preferred EOS makes this
asymmetry particularly severe under sampled-token OPD. At each visited
prefix, direct teacher supervision is obtained through the token sampled
by the student. An alternative EOS with negligible student probability
therefore receives very few direct sampled updates, while the student's
native EOS continues to receive the suppressive signal described above.
This makes reliable transfer of the teacher's preferred termination form
difficult at practical sampling budgets, reflecting a sampling-coverage
limitation related to those studied in on-policy optimization
\citep{mei2021understanding,agarwal2021theory}.

Adding the teacher-preferred token to the decoding stopping set does not
address this asymmetry: it neither changes the token-level distillation
signal nor directly increases the probability of sampling that token.
In \Cref{sec:k2_stages}, we examine a complementary case in which the
pretrained student already assigns non-negligible probability to the
teacher-preferred EOS, allowing vanilla OPD to transfer the surface
termination form, although termination later deteriorates. Fix~1 therefore
distinguishes decoding alignment from alignment of the distillation signal:
tokens treated as equivalent stopping events by the decoder are still
supervised as distinct actions by the objective.

\paragraph{Remark: sampling versus objective.}
The difficulty is not solely a consequence of single-token sampling.
Under a softmax parameterization, the full-vocabulary local reverse-KL
gradient is also weighted by the student's token probabilities, so a
teacher-preferred EOS with negligible student probability can receive only
a weak recovery signal even when the entire vocabulary is evaluated.
Full-vocabulary computation removes current-token sampling noise, but does
not remove this probability weighting or reconcile the semantics of
different termination tokens. Because full-vocabulary OPD incurs
substantially higher computational cost, we do not evaluate it here and
restrict our empirical analysis to sampled-token OPD.

\subsection{Probability-Level Termination Alignment}
\label{sec:fixes}
The failure of Fix~1 indicates that the termination mismatch must be
addressed in the distillation signal, so we consider corrections that
reconcile the termination probabilities of the teacher and the student. Let
$\mathcal{E}_{\mathrm{EOS}}$ denote the termination-equivalent tokens under
the rollout protocol of a pair, and let
$e_\star\in\mathcal{E}_{\mathrm{EOS}}$ be a canonical EOS token that the base
student supports. Fix~1 modifies only the decoding interface. We next
consider three ways of aligning the termination signal itself, which modify
the teacher distribution, the semantic action used by the objective, and the
student action space.

\noindent\textbf{Fix 2: Teacher-side EOS mapping.}
We map the probability mass that the teacher places on all
termination-equivalent tokens to the canonical student EOS token:
\[
\widetilde{\pi}^{E}(e_\star\mid s_t)
=
\sum_{e\in\mathcal{E}_{\mathrm{EOS}}}
\pi^E(e\mid s_t).
\]
This idealized mapping sets the teacher probabilities of the other EOS tokens
to zero. In practice we retain negligible probability on them for numerical
stability and adjust the canonical probability so that the total mass is
preserved. The student distribution is unchanged, and rollout terminates only
on $e_\star$, which transfers the stopping signal of the teacher to a token
that the student already supports.

\noindent\textbf{Fix 3: Semantic EOS class.}
Rather than choosing a canonical surface token, we treat all tokens in
$\mathcal{E}_{\mathrm{EOS}}$ as realizations of a single semantic action
$\textsc{stop}$. For either $\pi\in\{\pi_\theta,\pi^E\}$, define
\[
\bar{\pi}(\textsc{stop}\mid s_t)
=
\sum_{e\in\mathcal{E}_{\mathrm{EOS}}}
\pi(e\mid s_t),
\]
while $\bar{\pi}(a\mid s_t)=\pi(a\mid s_t)$ for non-EOS tokens. If the
sampled token $y_t$ is an EOS token, we replace it by
$\bar y_t=\textsc{stop}$, and otherwise $\bar y_t=y_t$. We then apply the
standard sampled-token OPD update directly:
\[
A_t
=
\log \bar{\pi}^{E}(\bar y_t\mid s_t)
-
\log \bar{\pi}_{\theta}(\bar y_t\mid s_t),
\qquad
g_t
=
A_t\nabla_\theta
\log \bar{\pi}_{\theta}(\bar y_t\mid s_t).
\]
EOS samples are thus supervised through the total termination probability,
non-EOS tokens are unchanged, and all tokens in $\mathcal{E}_{\mathrm{EOS}}$
are registered as valid stopping tokens during rollout.

\noindent\textbf{Fix 4: Canonical single-EOS action space.}
We map the EOS probability mass of the teacher to $e_\star$ as in Fix~2, and
we remove all other tokens in $\mathcal{E}_{\mathrm{EOS}}$ from the sampling
distribution of the student, which we renormalize over the remaining
probabilities. The same renormalized distribution is used for the actor
log-probability computation, and rollout terminates only on $e_\star$. This
produces a consistent action space with a single canonical termination
action, irrespective of how many termination-equivalent tokens the original
vocabulary contains.

\begin{figure}[!t]
    \centering
    \includegraphics[width=0.4\linewidth,
        trim=0bp 313.2bp 0bp 7.2bp,clip]{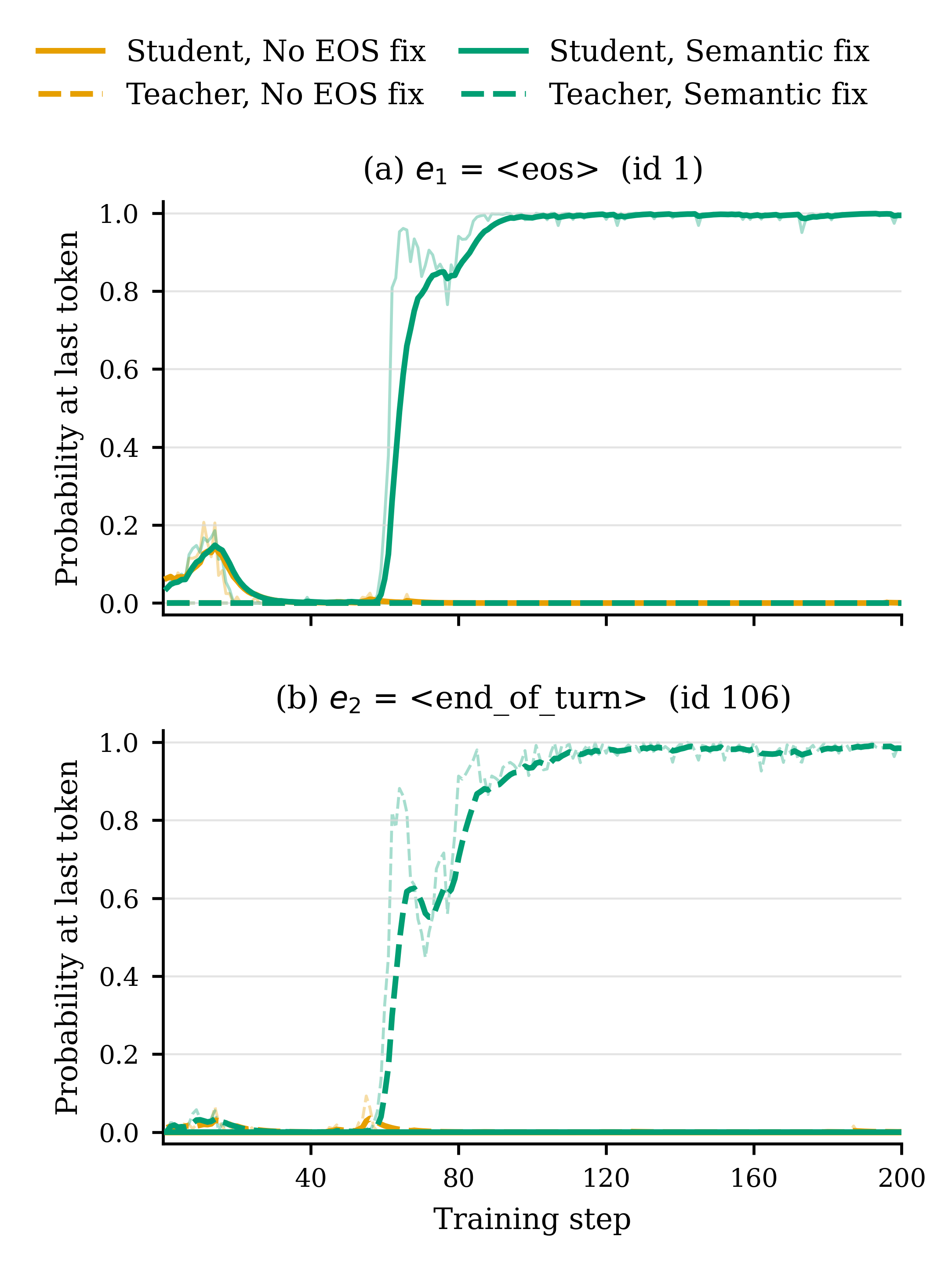}
    \par\vspace{0.2em}
    \begin{minipage}[t]{0.4\linewidth}
        \vspace{0pt}
        \centering
        \setlength{\parskip}{0pt}
        \includegraphics[width=\linewidth,
            trim=0bp 172.8bp 0bp 38.4bp,clip]{fig/gemma_eos_pref.png}%
        \par\nointerlineskip
        \includegraphics[width=\linewidth,
            trim=0bp 4.8bp 0bp 315.96bp,clip]{fig/gemma_eos_pref.png}
    \end{minipage}\hspace{5em}
    \begin{minipage}[t]{0.4\linewidth}
        \vspace{0pt}
        \centering
        \includegraphics[width=\linewidth,
            trim=0bp 4.8bp 0bp 182.76bp,clip]{fig/gemma_eos_pref.png}
    \end{minipage}
    \caption{
    \textbf{Gemma 3 termination preferences within a shared EOS set.}
    Probabilities are measured at the final generation position of each
    rollout, which either ends naturally or is truncated at the generation
    budget. Despite sharing the same declared EOS set, the Gemma 3 PT student
    and IT teacher favor \texttt{<eos>} and \texttt{<end\_of\_turn>},
    respectively. The semantic correction recovers termination without
    forcing these surface preferences to coincide.
    }
    \label{fig:gemma_eos_pref}
\end{figure}

\begin{figure}[!t]
    \centering
    \begin{subfigure}[t]{\linewidth}
        \centering
        \includegraphics[width=\linewidth]{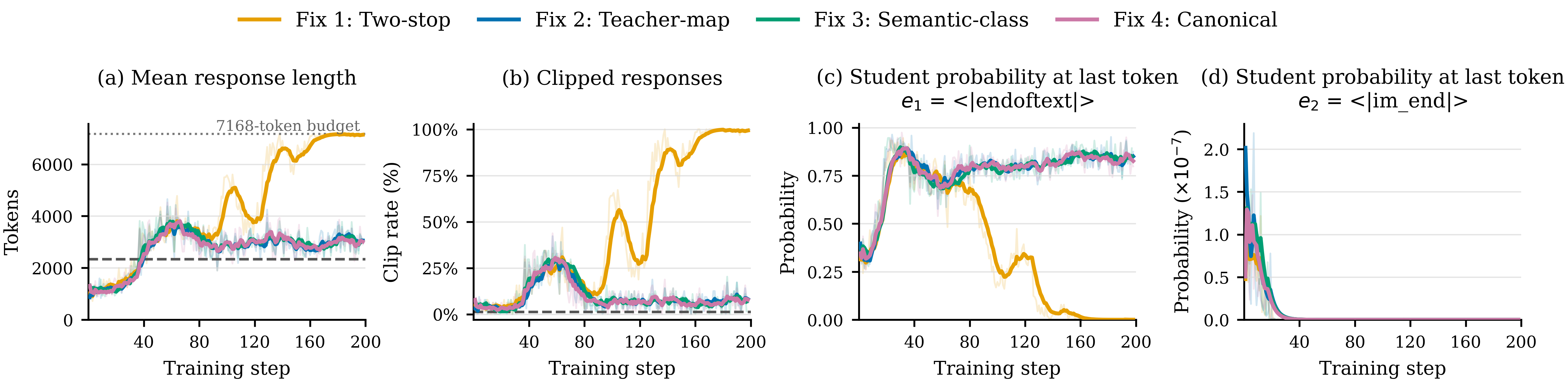}
        \caption{Qwen3: comparison of four EOS corrections.}
        \label{fig:fix_cmp}
    \end{subfigure}

    \vspace{0.6em}
    \begin{subfigure}[t]{\linewidth}
        \centering
        \includegraphics[width=\linewidth]{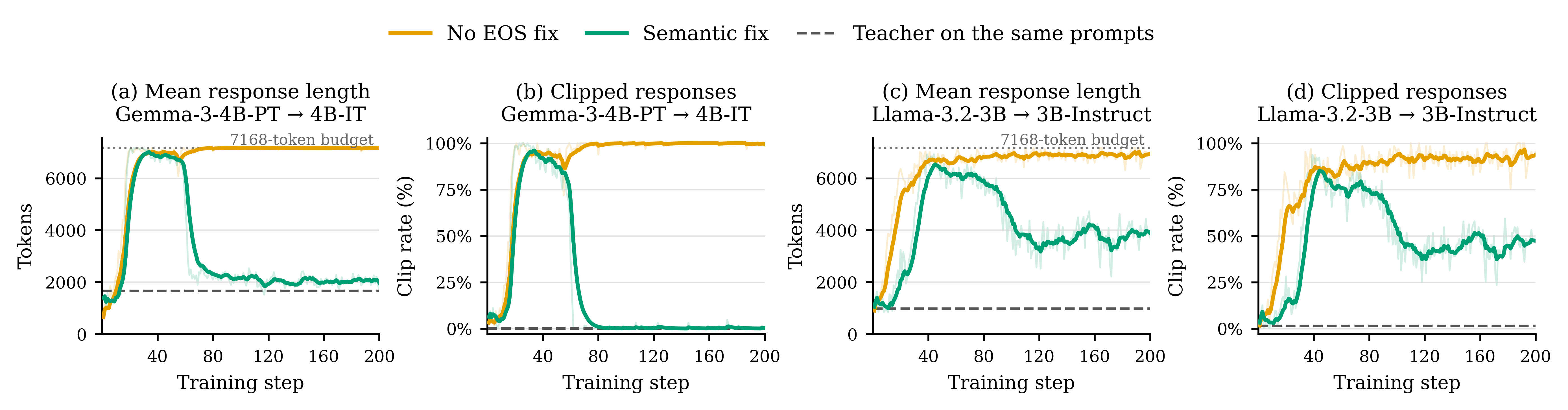}
        \caption{Gemma 3 (left) and Llama 3.2 (right): vanilla OPD versus semantic EOS correction.}
        \label{fig:more_models_length_drift}
    \end{subfigure}
    \caption{
    \textbf{EOS correction across model families.}
    The semantic EOS correction mitigates length inflation in Qwen3 (top),
    Gemma 3 (bottom left), and Llama 3.2 (bottom right), with different
    recovery dynamics and residual gaps from the teachers. The Qwen3 ablation
    additionally compares teacher-side EOS mapping and canonical single-EOS
    mapping, which behave similarly to semantic aggregation, against
    two-stop decoding, which is insufficient. Gray reference lines in the
    length and clipping plots indicate the corresponding teacher statistics
    on the same prompts. All experiments use the TTRL prompt template.
    }
    \label{fig:eos_correction_across_families}
\end{figure}

The ablation in \Cref{fig:fix_cmp} completes the diagnostic sequence in
\Cref{sec:causes}. Fix~1, which changes only the rollout stopping set,
remains close to vanilla OPD, whereas Fixes~2, 3, and~4 reconcile termination
at the probability level and behave similarly: response length and clipping
remain substantially closer to the teacher references, and the native
termination probability of the student no longer collapses. This comparison
isolates the part of the procedure that matters, because adding
\texttt{<|im\_end|>} to the stopping set neither makes the initially
unsupported token likely to be sampled nor changes the negative supervision
that the frequently sampled \texttt{<|endoftext|>} receives. Only a
correction of the termination signal at the probability level addresses the
mismatch directly.

\noindent\textbf{Which correction to adopt.}
Fixes~2, 3, and~4 are empirically similar in the Qwen3 ablation, but they
differ in the assumptions they require. For the Qwen3 pair studied above,
teacher-side EOS mapping (Fix~2) is already sufficient and provides the
simplest correction, because once the corresponding student and teacher
termination tokens are known it modifies only the teacher distribution and
leaves both the student distribution and the sampled-token OPD update
unchanged, while canonical single-EOS decoding (Fix~4) further constrains the
action space of the student. Fixes~2 and~4, however, both require a
particular surface token to be designated as the canonical representation of
termination. This is natural for a known Qwen3 mismatch, but it is less
convenient when the correction is extended to model families with different
termination conventions, and especially when the student and the teacher
expose the same declared EOS set but prefer different tokens within it.
Semantic EOS aggregation (Fix~3) avoids this requirement, because it matches
the total probability assigned to termination without designating a canonical
surface form and without requiring the two models to learn identical token
preferences. We therefore use Fix~2 as the simplest correction for the
explicit Qwen3 mismatch, and we adopt semantic EOS aggregation as the default
formulation for the cross-family experiments in \Cref{sec:cross_family}
(the same base-to-post-trained distillation repeated within Qwen3,
Llama 3.2, and Gemma 3, not distillation between families).

\noindent\textbf{Template and grader effects.}
The shared termination failure should be distinguished from its effect on
measured task performance. The Qwen3 template comparison in
\Cref{fig:baseline_eval} shows that cross-template evaluation can reduce the
observed response length, most clearly when a DAPO-trained model is evaluated
with the TTRL template, and that performance gains from OPD can persist
despite severe length inflation, particularly under TTRL evaluation. The
apparent effect on accuracy also depends on the grader, because DAPO-style
evaluation is more sensitive to long redundant continuations
(\Cref{app:dapo_grader}). Length inflation therefore does not necessarily
imply a comparable loss of reasoning capability. These observations about
templates complement the cross-family termination mechanism rather than
define it.

\subsection{Termination Mismatch Across Model Families}
\label{sec:cross_family}
The EOS configurations in \Cref{tab:eos_configs} illustrate why a
representation-independent formulation is useful. Llama 3.2 resembles Qwen3,
because its base checkpoint declares \texttt{<|end\_of\_text|>} while the
post-trained checkpoint also recognizes conversational termination tokens
such as \texttt{<|eom\_id|>} and \texttt{<|eot\_id|>}. Gemma 3 provides a
complementary and more general case. Its PT and IT checkpoints already
declare the same stopping tokens, \texttt{<eos>} and
\texttt{<end\_of\_turn>}, so no token is missing from the decoding stopping
set. \Cref{fig:gemma_eos_pref} nevertheless shows that their learned
preferences differ sharply: the student assigns its stopping probability
mainly to \texttt{<eos>}, whereas the teacher favors
\texttt{<end\_of\_turn>}, and with the semantic correction the student
recovers reliable termination without adopting the surface token that the
teacher prefers. The relevant mismatch is therefore more general than a
discrepancy between declared EOS sets, since models can expose the same
decoding interface while they distribute their learned termination
probability differently across semantically equivalent tokens.

The cross-family results in \Cref{fig:eos_correction_across_families} support
this interpretation. Relative to vanilla OPD, the semantic EOS correction
substantially reduces response length and clipping in Qwen3, Llama 3.2, and
Gemma 3, where Qwen3 and Gemma 3 approach their corresponding teacher
reference lengths relatively closely and Llama 3.2 retains a larger residual
gap. The shared effect is therefore not that the student is forced to
reproduce the EOS token that the teacher prefers. Semantic aggregation
instead removes the competition between surface tokens by supervising their
total probability as a common stopping action, and the different recovery
trajectories indicate that correcting this mismatch does not imply identical
length dynamics across model families. The mechanism further predicts that
the termination action of the student is suppressed from the beginning of
training when the teacher-preferred termination token lacks sampling support.
The midtraining and SFT checkpoints of K2-Horizon provide the corresponding
control, because they already assign substantial probability to the
teacher-preferred termination token, and vanilla OPD from these checkpoints
shows neither the termination collapse observed above nor unstable response
length and clipping. We develop this contrast in \Cref{sec:k2_stages}, where
we also show that sampling support alone does not keep termination stable
throughout training.

\begin{figure}[t]
    \centering
    \includegraphics[width=\linewidth]{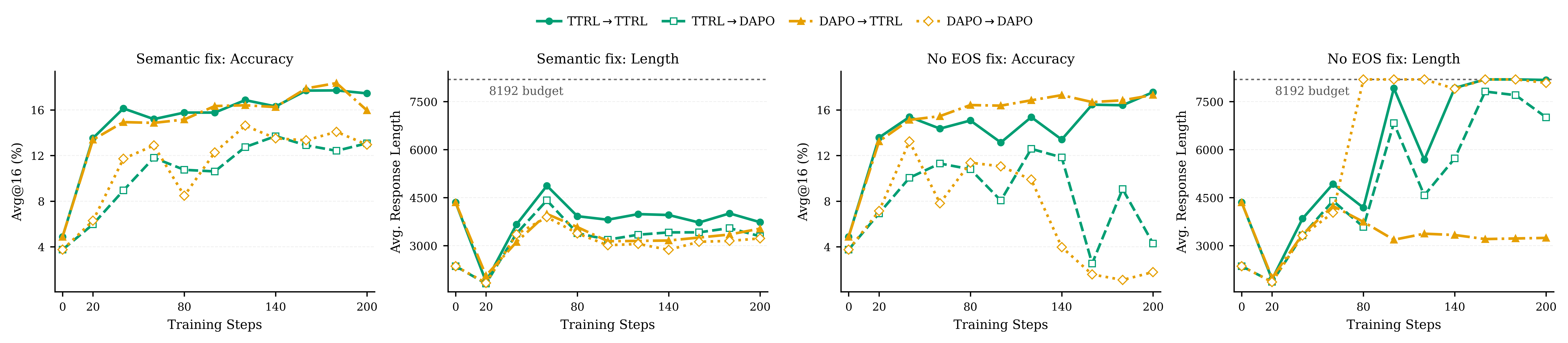}
    \caption{
    \textbf{Template-dependent evaluation in Qwen3 with and without EOS correction.}
    We distill Qwen3-4B into Qwen3-1.7B-Base on the DAPO dataset and
    report Avg@16 over AIME24, AIME25, and AMC23, computed using 16 sampled
    responses per problem. In the legend,
    $\mathrm{A}\!\rightarrow\!\mathrm{B}$ denotes that the student is trained
    with template A and evaluated with template B.
    }
    \label{fig:baseline_eval}
\end{figure}



\section{Going Beyond the EOS Mismatch}
\label{sec:k2_stages}

It was demonstrated in \Cref{sec:length_inflation} that aligning termination at the
probability level removes the length inflation caused by the mismatch. This
section addresses two questions that the mismatch does not answer: under
which conditions the mismatch is harmful, and which length dynamics remain
after the correction.


\subsection{OPD Across Post-Training Stages}

The experiments above compare base students with post-trained teachers.
To examine how termination behavior depends on the student's training
stage, we use K2-Horizon 7B, which includes checkpoints spanning pretraining,
midtraining, supervised fine tuning, and the final post-trained model.
We fix the final checkpoint as the teacher and initialize the student from
different earlier stages.

K2-Horizon provides a useful stage-wise comparison because the relevant
termination-token IDs and the declared stopping set remain unchanged across
stages (\Cref{tab:eos_configs}). We can therefore examine how learned
termination preferences and OPD dynamics evolve over training without
conflating them with changes in the declared EOS configuration.

\Cref{fig:k2_stage_alignment} shows a clear stage-dependent shift in
termination preference. The pretrained student primarily favors
\texttt{<|ifm|endoftext|>}, whereas the final post-trained teacher favors
\texttt{<|ifm|im\_end|>}. Most of this shift already occurs during
midtraining: the midtraining checkpoint places substantial probability on
the teacher-preferred termination token, and by the SFT stage the student's
termination preference is already close to that of the final teacher.

The Pretrain-to-Final run also provides a useful contrast with Qwen3.
Although the pretrained K2-Horizon student and final teacher initially prefer
different termination tokens, the student already assigns non-negligible
probability to the teacher-preferred token. It can therefore sample this
token and receive direct teacher supervision on it. Under vanilla OPD,
termination mass consequently shifts toward the teacher-preferred surface
form, showing that sampled-token OPD can transfer an alternative termination
token when it already has sufficient support under the student policy.

Importantly, this transfer does not prevent a later stopping failure. After
the teacher-preferred termination form has been learned, termination
probability decreases again and response length grows toward the generation
budget. Thus, surface-token alignment alone does not account for the full
Pretrain-to-Final dynamics. We return to this late-stage behavior in
\Cref{sec:residual_dynamics}.

\noindent\textbf{A non-interference check at the midtraining and SFT stages.}
The midtraining and SFT checkpoints provide a control for the semantic
correction.
The midtraining and SFT students already place substantial probability on the
teacher-preferred termination token, so vanilla OPD from these checkpoints
does not exhibit the severe termination failure observed in the more
mismatched base-to-post-trained settings. Applying semantic EOS aggregation
in these cases produces similar response-length, clipping, and termination
dynamics, with no consistent degradation in downstream performance. When the
termination representations are already sufficiently aligned, the correction
is therefore largely inert rather than a force that pushes the student toward
different behavior. We provide the full training and evaluation comparisons
in \Cref{app:k2_later_stage_semantic}.

\subsection{Residual Length Dynamics}
\label{sec:residual_dynamics}

\begin{figure}[!t]
    \centering
    \includegraphics[width=\linewidth]{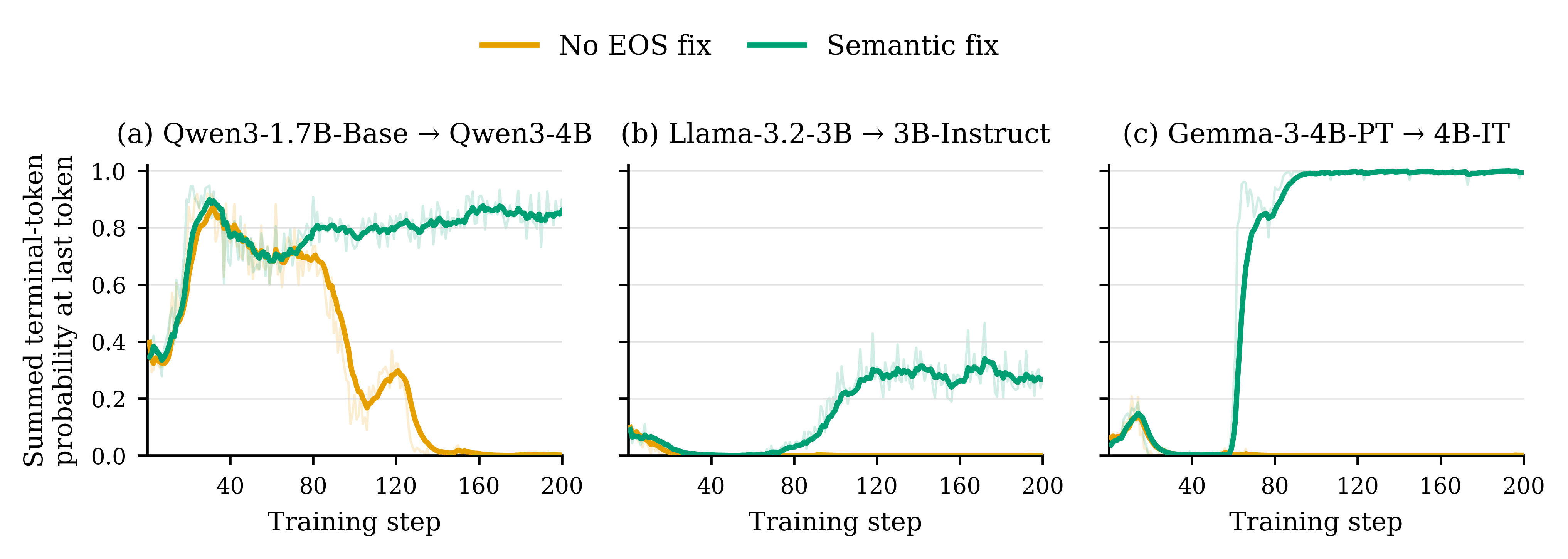}
    \caption{
    \textbf{Evolution of termination probability across model families.}
    Summed probability assigned by the student to its termination tokens at
    the final observed rollout position for (a) Qwen3, (b) Llama 3.2, and
    (c) Gemma 3. Under vanilla OPD, termination probability falls to near zero
    in all three settings. The semantic EOS correction prevents this collapse
    or enables recovery, but the dynamics differ across families: Qwen3
    quickly reaches and maintains a high termination probability, whereas
    Llama 3.2 and Gemma 3 remain in a low-probability regime for longer and
    recover only later in training.
    }
    \label{fig:model_eos_cmp}
\end{figure}

\begin{figure}[t]
    \centering
    \includegraphics[width=\linewidth]{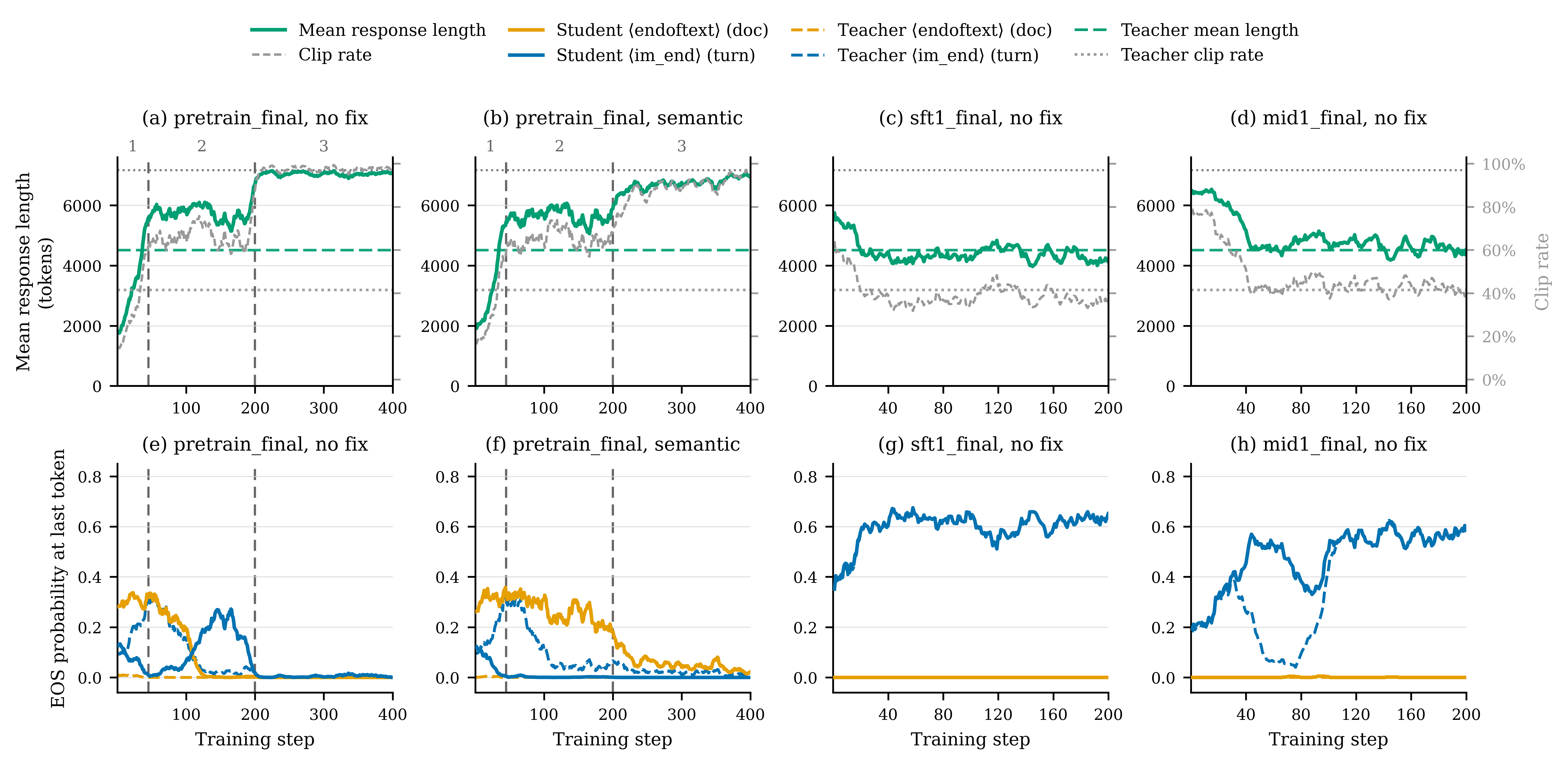}
 \caption{
\textbf{OPD from different K2-Horizon training stages to the final
post-trained model.}
The top row shows response length and clipping rate, and the bottom row
shows student and teacher probabilities on the two termination tokens at
the final rollout position. We compare Pretrain-to-Final distillation with
and without semantic EOS correction, together with vanilla OPD initialized
from the SFT and midtraining checkpoints. Horizontal green dashed and gray
dotted lines denote the teacher mean length and clipping rate, respectively.
Labels 1--3 mark the initial growth, intermediate plateau, and late-stage
re-inflation phases of the Pretrain-to-Final runs; vertical gray dashed
lines indicate their approximate boundaries.
}
    \label{fig:k2_stage_alignment}
\end{figure}

Removing the surface EOS mismatch does not eliminate all response-length
dynamics. We distinguish early length growth after correction from the
renewed inflation observed late in the K2-Horizon run (Phase~3 in
\Cref{fig:k2_stage_alignment}).

\paragraph{Early length growth after EOS correction.}
Even with semantic EOS correction, response length initially increases
before decreasing across Qwen3, Llama 3.2, and Gemma 3
(\Cref{fig:eos_correction_across_families}). This transient drift is weak
in Qwen3 and more pronounced in Llama 3.2 and Gemma 3. Student responses
also remain longer than the corresponding teacher reference at later
checkpoints, with the largest residual gap in Llama 3.2.

The termination dynamics in \Cref{fig:model_eos_cmp} are consistent with
these differences. With semantic correction, Qwen3 recovers its stopping
probability relatively quickly, whereas Llama 3.2 and Gemma 3 remain in a
low-termination regime for longer. These curves describe different recovery
dynamics, but do not by themselves establish their cause.

The corrected K2-Horizon Pretrain-to-Final run also exhibits an initial increase
in response length, marked as Phase~1 in \Cref{fig:k2_stage_alignment}.
Unlike the subsequent recovery seen in the other families, this increase
is followed by an elevated, fluctuating plateau in Phase~2. The plateau
should therefore not be interpreted as recovery to the teacher's response
length or as convergence of the termination distributions.

In \Cref{sec:length_drift}, we analyze a possible contributor to early
length growth from the trajectory-level reverse KL perspective. Local
sampled-token OPD omits the future student--teacher mismatch incurred by
continuation. At a fixed prefix, this omission removes a nonnegative
contribution to the stopping-logit update relative to the
trajectory-consistent reverse KL update. The resulting relative
continuation bias is consistent with early length growth after EOS
correction, including Phase~1 of the corrected K2-Horizon run.

\paragraph{Late-stage re-inflation in K2-Horizon.}
Phase~3 of the Pretrain-to-Final runs reveals an additional failure.
After the intermediate plateau, response length increases again,
termination probabilities approach zero, and almost all rollouts
eventually reach the generation budget. This transition is relatively
abrupt under vanilla OPD and more gradual under semantic EOS correction.

In the vanilla run, the renewed inflation occurs after naturally
terminated rollouts have already shifted toward the teacher-preferred
EOS token. More importantly, the corrected run exhibits the same
qualitative late-stage behavior even though the objective already treats
both EOS tokens as a shared stopping action. Surface termination-token
identity alone therefore cannot explain this renewed collapse.

The late-stage increase is qualitatively similar to the length inflation
and truncation collapse reported by \citet{demisify}, but this similarity
does not establish a shared mechanism. We leave the origin of this
late-stage K2-Horizon collapse, together with the differences in recovery
dynamics across model families, for future work.
\section{Related Works}
\paragraph{Length inflation in on-policy distillation.}
On-policy distillation trains the student on its own sampled rollouts under
per-token supervision from a teacher \citep{lu2025onpolicydistillation,
yang2025qwen3}. Uncontrolled length growth during OPD has been reported in
several recent works, each identifying a distinct mechanism.
\citet{demisify} observe abrupt length inflation accompanied by repetition
saturation, and attribute it to an implicit preference of the distillation
objective for long and repetitive rollouts. \citet{li2026rethinking} find that
overly large generation budgets lead to late-stage training collapse, and
attribute it to the degradation of teacher supervision quality with trajectory
depth. \citet{zhao2026decoupling} relate length inflation to entropy collapse induced by reverse-KL distillation. These mechanisms are not mutually exclusive, and each may well be operative in the settings where it was identified. They do, however, share a precondition: the student must first fail to terminate. In this work we revisit that precondition across base--post-trained model pairs from
several widely used model families, and find that the two models frequently encode termination through different special tokens. Where this is the case, the
token-level objective penalizes the student's stopping action from the outset, producing the same truncation-and-repetition signature studied above but through
a cause that precedes those mechanisms and is removed by a change to the objective alone. Our results are therefore complementary to prior analyses rather
than in tension with them: in settings where teacher and student already agree on termination, the mechanisms above remain the relevant explanation, and where they do not, termination alignment should be established before length inflation is attributed to an algorithmic cause.

\noindent\textbf{Rollout quality degradation and its mitigation.}
Beyond length inflation, a second line of work focuses on the degradation of
rollout quality itself. Once the student errs or begins repeating midway
through a trajectory, its subsequent prefixes drift away from the teacher's
normal distribution; the teacher's preferences over these degenerate prefixes
become decoupled from task correctness, the distillation signal grows
increasingly misaligned, and this in turn pushes the student toward even more
degenerate generations. \citet{liu2026prefix} characterize this as
\emph{local teachability collapse}, and \citet{posbias2026} show that
late-trajectory tokens systematically drift off the teacher distribution.
Two families of remedies have been proposed. The first distills only on the
early segment of a trajectory, where teacher supervision remains reliable, as
in Prefix OPD \citep{prefixopd2026}, Prune-OPD \citep{pruneopd2026}, and
ADWIN \citep{adwin2026}, which adapts the rollout horizon according to prefix
drift. The second improves rollout quality directly so that the student avoids
entering degenerate segments, as in SWITCH \citep{switch2024}, which injects
teacher tokens where the teacher and student disagree, the teacher-blended
behavior policy of \citet{plyusov2026trust}, and Relay-OPD
\citep{relayopd2026}, which lets the teacher take over at the point where the
student's reasoning first goes wrong. All of these works treat rollout
degradation as an algorithmic pathology arising from the interaction between
the distillation objective and on-policy sampling. We instead show that the
student's inability to stop as training proceeds has a lower-level,
implementation-level root cause, namely a termination-token mismatch, which
alone suffices to produce budget saturation without any trajectory-level
pathology.

\noindent\textbf{Termination-token semantic mismatch.}
The Qwen3 base model declares \texttt{<|endoftext|>} as its termination token, whereas the post-trained model uses \texttt{<|im\_end|>} as its primary conversational termination token while retaining \texttt{<|endoftext|>} as an additional stopping token
\citep{qwen3concepts}.

Prior work has identified related special-token issues in OPD. \citet{fu2026revisiting} identify tokenizer and special-token mismatch, including EOS markers, as a failure mode of sampled-token supervision and mitigate it by masking problematic special tokens from the distillation signal. \citet{plyusov2026trust} address the Qwen3 EOS mismatch through EOS canonicalization, mapping the native student and teacher EOS tokens to a shared stopping event before sampling and KL evaluation.

Our work instead takes termination mismatch itself as the object of study. We systematically trace how token-level OPD can amplify a disagreement between surface termination tokens into termination-probability collapse and length inflation, and compare multiple interventions that separately modify the decoding rule, teacher distribution, semantic action space, and student action space. This comparison shows that aligning the decoding stopping set alone is insufficient, whereas aligning termination at the probability level consistently mitigates the failure.

Moreover, we show that the phenomenon is not specific to Qwen3 or to an
explicit difference in declared EOS configurations. Across Qwen3, Llama,
and Gemma, student and teacher models can prefer different surface
tokens for the same semantic stopping action; in Gemma 3, this mismatch
arises even when the two models already declare the same EOS set.
These results motivate treating termination as a semantic event rather
than relying solely on model-level EOS configuration.
\section{Limitations and Future Work}
Our experiments across Qwen3, Llama, and Gemma identify termination-token
mismatch as an important source of length inflation, but not a complete
explanation of the phenomenon. The mismatch can arise either from different
declared stopping sets or from different learned preferences among
termination tokens even when the declared sets coincide. Our K2-Horizon
experiments further show that, after the student has successfully acquired
the teacher-preferred termination token, the probabilities of both
termination tokens can decrease again later in training, accompanied by a
second increase in response length; a similar late-stage increase remains
under semantic EOS correction. We therefore do not attribute these residual
dynamics to termination-token mismatch and leave their underlying mechanism
for future work. Consistent with this limitation, prior work also reports
length inflation in settings without the same declared stopping-set mismatch
\citep{demisify}. Characterizing these remaining mechanisms and the
conditions under which they dominate is an important direction for future
study.

A second limitation concerns the extension beyond single-turn tasks.
Our experiments consider single-turn mathematical reasoning, where the tokens
grouped into the semantic EOS class are operationally equivalent under the
decoding protocol: each ends the response. This equivalence need not hold in
multi-turn or agentic settings, where special tokens may distinguish message
completion, the end of an assistant turn, document termination, and a handoff
to a tool. The base-to-post-trained pairing used here would also not carry over:
most base models lack tool-use and multi-turn capability, so a fair
comparison would instead require two post-trained models of different
sizes, which is a substantially different experimental design. Treating these functionally distinct events as interchangeable
alignment targets could alter interaction control flow rather than merely
correct stopping behavior. Extending our approach therefore requires
context-dependent termination classes that preserve the semantics of the
underlying interaction protocol. Length inflation may also have more severe
consequences in these settings: repetition can exhaust the context budget or
disrupt subsequent tool calls, affecting task success rather than token count
alone. These effects are not evaluated in our current experiments.

Finally, our prompt-template analysis is limited to the Qwen3 experiments
with the three templates and two graders studied here. Within this scope,
the severity of length inflation depends strongly on the training and
evaluation templates, while the associated downstream performance gains can
transfer across templates. The cross-family evidence for termination mismatch
does not, by itself, establish that these template effects generalize to
Llama or Gemma. A more systematic analysis across model families, task domains,
and evaluation protocols is needed to distinguish template-specific generation
artifacts from more general changes in reasoning capability.
\section{Conclusion
}

We document and analyze length inflation in on-policy distillation. One major cause is a termination mismatch: base students and post-trained teachers mark the end of a response with different special tokens. The token-level distillation objective therefore treats the student's own termination as an over-produced token and suppresses it, so the student's termination probability decays over training and responses keep growing. We compared four fixes and found that aligning the two termination tokens into a single stop action at the probability level restores response length and clipping ratio to the teacher's reference levels, whereas merely enlarging the decoding stop set does not. This confirms from the opposite direction that the problem lies in the training objective rather than the decoding setup. Our template experiments further show that the training template has a persistent effect on response length, while the evaluation format substantially changes measured accuracy, and the two need not move together. Overall, termination semantics is an overlooked but important axis of alignment in on-policy distillation, and future work will extend to multi-turn and agentic settings and examine other mechanisms behind length inflation.

\section*{Author Contributions and Acknowledgments}
Four student authors (YY, TY, SL, KZ) contribute equally to this project: The length inflation is observed by SL, SL proposed several mechanisms and solutions to mitigate this. However, the actual EOS mismatch is identified by YY in adjacent projects supervised by WZ in 2026 summer. YY verified this with SL on Qwen. SL, TY and KZ extends this to Gemma, Llama and check the staged phenomena in K2-Horizon.

All authors appreciate the valuable discussions with Dr. Mikhail Yurochkin and Dr. Hector Liu in checking the implementation on their K2-Horizon model.
\bibliography{main}
\bibliographystyle{abbrvnat}

\appendix
\newpage
\section*{Appendix}
\addcontentsline{toc}{section}{Appendix}
\startcontents[sections]
\printcontents[sections]{l}{1}{\setcounter{tocdepth}{2}}

%
%
%

\section{Experimental Details}
\label{app:exp_details}

\paragraph{Implementation and compute.}
Our implementation is built on the public OPD codebase of \citet{li2026rethinking}, with modifications for the EOS corrections and evaluation protocols studied in this work. Unless otherwise specified, all experiments can be run on four NVIDIA RTX PRO 6000 GPUs with 96 GB of memory per GPU. We will release the code and experiment configurations to facilitate reproduction.

\paragraph{Main setup.} Unless otherwise specified, the main experiments in this note use Qwen3-1.7B-Base as the student and Qwen3-4B as the teacher, trained on
DAPO-Math-17K \citep{dapo} with thinking disabled. The default setting uses the TTRL
format for both training and evaluation. The exact prompt templates are
provided in \Cref{app:prompt_templates}. The teacher remains fixed
throughout training.

We use sampled token OPD. Student responses are generated on policy, and
the distillation signal at each generated token is given by the teacher
and student log probability difference on that token. Each training step
contains 16 prompts with four sampled responses per prompt, corresponding
to 64 trajectories per step. We train for 200 steps and save checkpoints
every 20 steps.

\begin{table}[h]
    \centering
    \small
    \caption{Training hyperparameters used in the main experiments.}
    \label{tab:training_hparams}
    \begin{tabular}{ll}
        \toprule
        \textbf{Hyperparameter} & \textbf{Value} \\
        \midrule
        Rollout temperature & 1.0 \\
        Rollout top-$p$ & 1.0 \\
        Rollout top-$k$ & None \\
        Optimizer & AdamW \\
        Learning rate & $1\times10^{-6}$ \\
        Learning rate schedule & Constant \\
        Adam betas & $(0.9,\,0.999)$ \\
        Weight decay & 0.01 \\
        Gradient clipping & 1.0 \\
        Optimization epochs per batch & 1 \\
        Mini batch size & 16 \\
        Loss aggregation & Token mean \\
        \bottomrule
    \end{tabular}
\end{table}

\paragraph{EOS correction experiments.}
The EOS correction experiments use the same training configuration and
differ only in their treatment of termination actions. We compare vanilla
OPD, two stop decoding, teacher side EOS mapping, semantic EOS aggregation,
and canonical single EOS mapping, as described in \Cref{sec:fixes}.

\paragraph{Checkpoint evaluation.}
We independently evaluate checkpoints from steps 20 through 200 on AIME24,
AIME25, and AMC23. These benchmarks contain 30, 30, and 40 problems,
respectively. Each problem is sampled 16 times. Evaluation uses a fixed sampling seed of 0 for all methods and checkpoints. Unless otherwise specified, evaluation uses the TTRL format with thinking disabled.

\begin{table}[t]
    \centering
    \small
    \caption{Sampling hyperparameters used for checkpoint evaluation.}
    \label{tab:eval_hparams}
    \begin{tabular}{ll}
        \toprule
        \textbf{Hyperparameter} & \textbf{Value} \\
        \midrule
        Samples per problem & 16 \\
        Temperature & 0.7 \\
        Top-$p$ & 0.95 \\
        Top-$k$ & None \\
        Repetition penalty & 1.0 \\
        \bottomrule
    \end{tabular}
\end{table}

\paragraph{Evaluation metrics.}
For each problem, we compute the mean correctness over its 16 sampled
responses. We first average over problems within each benchmark and then
report the unweighted macro average over AIME24, AIME25, and AMC23 as
EOS termination rate, format compliance, and answer extraction success
when relevant.

\paragraph{Generation lengths.}
For the Qwen3, Llama 3.2, and Gemma 3 experiments, training uses a maximum
prompt length of 1{,}024 tokens and a maximum response length of 7{,}168
tokens, with a maximum model context of 8{,}192 tokens. Independent
checkpoint evaluation allows up to 8{,}192 generated tokens with a maximum
model context of 12{,}288 tokens.

\paragraph{K2-Horizon stage-wise experiments.}
For the K2-Horizon experiments, training uses a maximum prompt length of
1{,}024 tokens and a maximum response length of 7{,}168 tokens, with a
maximum model context of 8{,}192 tokens, matching the native context length
of the pretrained checkpoint. Checkpoint evaluation allows up to 7{,}168
generated tokens. The Pretrain-to-Final runs are trained for 400 steps,
whereas the Midtrain-to-Final and SFT-to-Final runs are trained for
200 steps. Unless otherwise specified, the remaining OPD configuration
follows the main experimental setup.

\section{Prompt and Evaluation Protocols}

\label{app:imple_detail}
\subsection{Prompt Templates}
\label{app:prompt_templates}

We consider three prompt templates for generating student rollouts.
The exact templates used in our experiments are provided below.
The placeholder \texttt{\{problem\}} is replaced by the corresponding
problem statement.

\begin{promptbox}{DAPO Template}
Solve the following math problem step by step. The last line of your response
should be of the form Answer: $Answer (without quotes) where $Answer is the
answer to the problem.

{problem}

Remember to put your answer on its own line after "Answer:".
\end{promptbox}

\begin{promptbox}{TTRL Template}
{problem} Please reason step by step, and put your final answer within \boxed{}.
\end{promptbox}

\begin{promptbox}{Raw-Question Template}
{problem}
\end{promptbox}
\subsection{Evaluation Graders}
\label{sec:grader_setup}

We evaluate responses under two answer formats corresponding to the DAPO and TTRL templates. The raw-question template is used only for training, as it does not prescribe a canonical final-answer format for reliable automatic extraction. For TTRL-style evaluation, we extract the final \texttt{\textbackslash boxed\{\}} answer. For DAPO-style evaluation, we start from the official DAPO grader, which extracts the answer following the final \texttt{Answer:} marker within the last 300 characters of the response and applies normalized exact matching.

We find, however, that the unmodified DAPO grader substantially underestimates the performance of Qwen3 models because it is sensitive to equivalent Markdown layouts of the final answer. As shown in \Cref{fig:dapo_eval_cmp}, the Qwen3-4B teacher obtains only 6.28\% Avg@16 under the original grader, compared with 24.34\% after applying our format-compatible parsing extension. Under the original grader, several student checkpoints even appear to outperform the teacher, despite the large discrepancy being primarily attributable to answer formatting rather than mathematical correctness.

Manual inspection shows that Qwen3 frequently places the answer after \texttt{Answer:} using display-math formatting, including multiline \texttt{\$\$...\$\$} blocks, which can be missed by the original line-based extraction. We therefore use a minor Qwen3-compatible extension of the DAPO grader throughout our DAPO-format evaluations. The extension changes only answer parsing; the original DAPO normalization and exact-match criterion are retained. Details are provided in \Cref{app:dapo_grader}.

\paragraph{Why answer format matters more in OPD.}
This issue is particularly relevant when evaluation protocols designed for RLVR are reused for OPD. In RLVR, a format sensitive verifier or reward provides direct training pressure toward outputs that can be recognized by the corresponding grader. In OPD, however, no such outcome reward is present. The student's output format is instead determined mainly by the teacher distribution together with the prompt. Consequently, a grader that assumes a particular answer layout may underestimate performance when that layout differs from the format naturally preferred by the teacher. This motivates using an evaluation parser that preserves the original correctness criterion while accommodating equivalent answer layouts.
\begin{figure}[t]
    \centering
    \includegraphics[width=\linewidth]{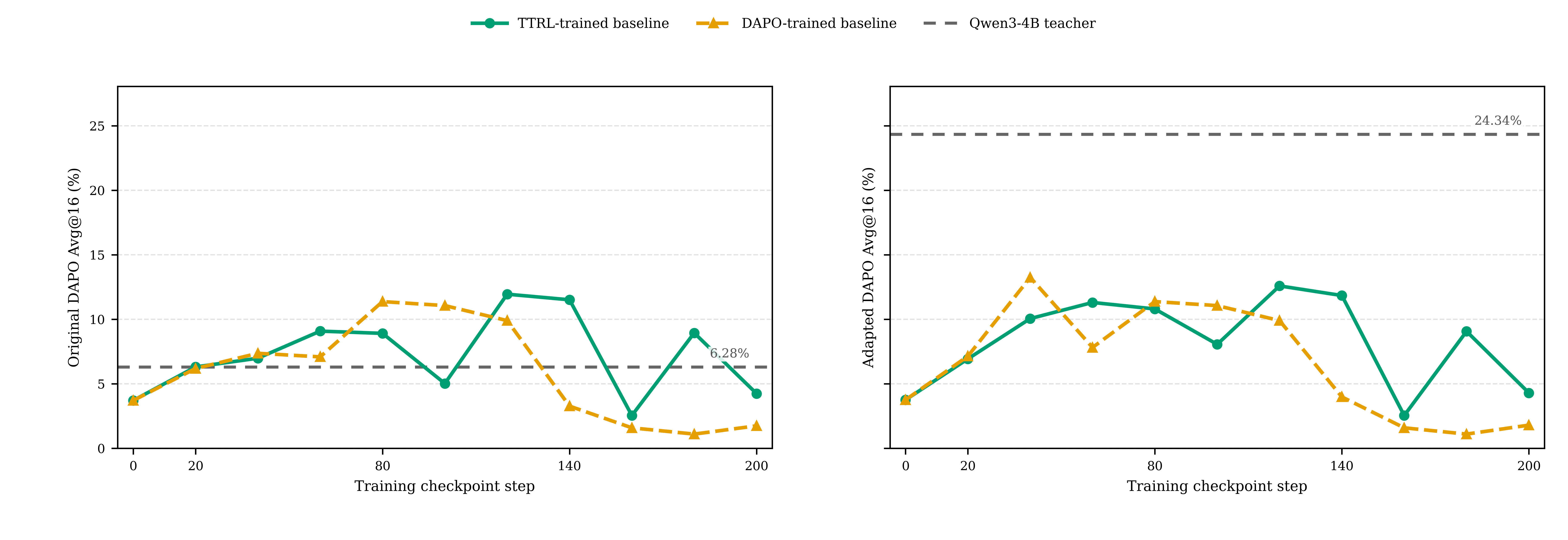}
    \caption{
    \textbf{Effect of DAPO answer parsing on Qwen3 evaluation.}
    We compare the original DAPO grader (left) with our
    format-compatible extension (right) for vanilla OPD using
    Qwen3-1.7B-Base as the student and Qwen3-4B as the teacher.
    The dashed horizontal line denotes teacher performance under the
    corresponding grader. The teacher score increases from 6.28\% to
    24.34\% Avg@16 when equivalent Qwen3 LaTeX answer layouts are
    parsed correctly, indicating that the original grader can
    substantially underestimate Qwen3 performance due to formatting.
    }
    \label{fig:dapo_eval_cmp}
\end{figure}

\subsection{DAPO-Compatible Evaluation Grader}
\label{app:dapo_grader}

The official DAPO grader considers only the final 300 characters of a
response, extracts the answer following the last case-insensitive
\texttt{Answer:} marker, applies the official DAPO normalization to the
prediction and ground truth, and performs normalized exact matching.

Qwen3 frequently renders valid answers using display-math formatting,
for example
\[
\texttt{Answer: \$\$\textbackslash boxed\{204\}\$\$},
\]
or with the display-math block beginning on the following line. Such outputs can be misparsed by the original line-based extraction.

We therefore first apply the unmodified official DAPO scorer and invoke
a fallback parser only when it returns an incorrect result. The fallback
uses the same final-300-character window and still requires the answer
to follow the final \texttt{Answer:} marker, but additionally supports
equivalent inline and display LaTeX layouts, including multiline
\texttt{\$\$...\$\$} blocks. After extraction, we apply exactly the same
DAPO normalization and exact-match comparison as the official grader.
No semantic matching, symbolic equivalence, or search for answers
elsewhere in the response is introduced.

\section{Future Mismatch and Transient Length Drift}
\label{sec:length_drift}

After correcting the termination-token mismatch, we no longer observe
the persistent budget saturation seen in vanilla OPD. However,
response length can still increase during the early stage of training
and decrease again later, a pattern that is particularly pronounced
in our Llama and Gemma experiments.

Empirically, at early steps we often observe low termination
probabilities near the end of a rollout, with the teacher assigning
even less probability to stopping than the student at many such
prefixes. At these prefixes, the local OPD coefficient for a sampled
stopping action is negative. This does not by itself determine the
expected stopping-logit update, which also depends on the
coefficients assigned to continuation tokens. Moreover, the local
signal does not account for the mismatch encountered after
continuing along the student's future trajectory.

This distinction is closely related to MiniLLM
\citep{minillm}, which studies sequence-level reverse KL
distillation through policy gradient optimization and decomposes
the gradient into single-step and long-range contributions.
Here, we use the unnormalized trajectory-level gradient as an
analytical reference for local sampled-token OPD, rather than
identifying it with MiniLLM's final length-normalized update.
We characterize a successor-KL contribution omitted by the local
update and show that its stopping-logit component is nonnegative
at a fixed prefix.

For this analysis, assume that the teacher is fixed and that both
policies are defined on the same action space, with aligned
termination semantics represented by a single stopping action $e$.
We use a common finite generation budget $H$ and assume common
support and finite KL divergences. At position $t$, let
$s_t=(x,y_{<t})$, whose prefix length determines the remaining
budget, and let $f(s,y)$ denote the next state after predicting
token $y$. Both predicting $e$ and exhausting the generation
budget lead to an absorbing terminal state $\dagger$.
We analyze expected on-policy, unclipped score-function updates
with trajectory-summed losses, before optimizer preconditioning;
the comparison need not hold exactly for updates with stale
rollouts, clipping, or length-dependent reweighting.

Define the local reverse KL cost
\[
\rho_\theta(s,y)
=
\log\frac{\pi_\theta(y\mid s)}{\pi^E(y\mid s)},
\]
and let
\[
V_\theta(s)
=
D_{\mathrm{KL}}
\left(
P_{\pi_\theta}^{s}
\,\|\,
P_{\pi^E}^{s}
\right)
\]
denote the reverse KL between the student and teacher
distributions over the remaining sampled tokens up to stopping
or budget exhaustion, with $V_\theta(\dagger)=0$.
By the chain rule of KL,
\begin{equation}
\label{eq:future_kl_bellman}
V_\theta(s)
=
\mathbb{E}_{y\sim\pi_\theta(\cdot\mid s)}
\left[
\rho_\theta(s,y)
+
V_\theta(f(s,y))
\right].
\end{equation}

\paragraph{Local OPD omits the future mismatch.}
Let $J_x(\theta)=V_\theta(s_1)$ be the trajectory-level reverse KL
for prompt $x$. For a student trajectory of length $T\le H$, its
negative gradient can be written as
\[
-\nabla_\theta J_x(\theta)
=
\mathbb{E}_{\tau\sim P_{\pi_\theta}^{s_1}}
\left[
\sum_{t=1}^{T}
A_t^{*}
\nabla_\theta\log\pi_\theta(y_t\mid s_t)
\right],
\]
where
\begin{equation}
\label{eq:trajectory_opd_signal}
A_t^{*}
=
\log\frac{\pi^E(y_t\mid s_t)}
         {\pi_\theta(y_t\mid s_t)}
-
V_\theta(f(s_t,y_t)).
\end{equation}
Here and below, the scalar coefficient multiplying the policy
score is treated as detached in the corresponding score-function
update. These are uncentered score-function coefficients;
adding a state-only baseline does not change the expected update.

In contrast, sampled-token OPD uses the local signal defined in
the preliminaries,
\[
A_t
=
\log\frac{\pi^E(y_t\mid s_t)}
         {\pi_\theta(y_t\mid s_t)},
\]
and therefore drops the successor value
$V_\theta(f(s_t,y_t))$. This omission arises from local credit
assignment rather than single-token sampling: differentiating
the full-vocabulary local reverse KL at detached student-generated
prefixes also omits the effect of the current action on future
visited prefixes.

Importantly, the omitted term is zero when $y_t=e$, since
termination transitions directly to the absorbing state.
Nevertheless, it contributes to the stopping-logit signal
through softmax coupling with continuation tokens.

\paragraph{Effect on the stopping logit.}
The difference between the two updates is particularly simple for the
stopping action. Let
\[
p_\theta(s)=\pi_\theta(e\mid s),
\qquad
C_\theta(s)
=
\mathbb{E}_{y\sim\pi_\theta(\cdot\mid s)}
\left[
V_\theta(f(s,y))
\right].
\]
Since $V_\theta(f(s,e))=0$, $C_\theta(s)$ contains only the successor
KL associated with continuation actions. We can further have the following proposition:

\begin{proposition}[Missing stopping pressure in local OPD]
\label{prop:transient_eos_bias}
Let $\Delta_e^{*}(s)$ and $\Delta_e^{\mathrm{OPD}}(s)$ denote the expected
ascent updates to the stopping logit $z_\theta(s,e)$ under the
trajectory-consistent and local OPD coefficients, respectively. At any fixed prefix $s$,
\[
\Delta_e^{*}(s)
-
\Delta_e^{\mathrm{OPD}}(s)
=
p_\theta(s)C_\theta(s)
\ge 0.
\label{eq:transient_eos_gap}
\]
Thus, local sampled token OPD always applies weakly less upward pressure
to the stopping logit than the trajectory-consistent reverse KL update.
\end{proposition}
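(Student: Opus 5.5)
The plan is to compute both expected stopping-logit updates at a fixed prefix $s$ directly from the softmax score function, and then subtract them. Under a softmax parameterization with logits $z_\theta(s,\cdot)$, we have
\[
\frac{\partial}{\partial z_\theta(s,e)}\log\pi_\theta(y\mid s)=\mathbb{1}[y=e]-p_\theta(s).
\]
The expected ascent update to $z_\theta(s,e)$ from visits to $s$, with a detached coefficient $B(s,y)$, is therefore $\mathbb{E}_{y\sim\pi_\theta(\cdot\mid s)}\bigl[B(s,y)(\mathbb{1}[y=e]-p_\theta(s))\bigr]$. Expanding the indicator gives the covariance form
\[
\Delta_e^{B}(s)=p_\theta(s)\Bigl(B(s,e)-\mathbb{E}_{y\sim\pi_\theta(\cdot\mid s)}[B(s,y)]\Bigr).
\]
I will first state this identity once for a generic coefficient. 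I will then note that contributions from other positions $t'$ with $s_{t'}\neq s$ do not involve $z_\theta(s,e)$, since the logits at distinct prefixes are separate parameters in this tabular view. Both updates share the same visitation weight on $s$, so that common factor can be dropped or kept without affecting the comparison.

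Next, I will instantiate the generic formula with the two coefficients. The local coefficient is $B^{\mathrm{OPD}}(s,y)=-\rho_\theta(s,y)$. The trajectory-consistent coefficient from \eqref{eq:trajectory_opd_signal} is $B^{*}(s,y)=-\rho_\theta(s,y)-V_\theta(f(s,y))$. By linearity of the covariance form in $B$, the difference reduces to the formula applied to $B^{*}-B^{\mathrm{OPD}}=-V_\theta(f(s,\cdot))$:
\[
\Delta_e^{*}(s)-\Delta_e^{\mathrm{OPD}}(s)=p_\theta(s)\Bigl(-V_\theta(f(s,e))+\mathbb{E}_{y}[V_\theta(f(s,y))]\Bigr).
\]
Since $f(s,e)=\dagger$ and $V_\theta(\dagger)=0$, the first term vanishes and the second is $C_\theta(s)$. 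This yields the claimed identity.

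For the sign, I will use that each $V_\theta(f(s,y))$ is a KL divergence between path distributions, so it is nonnegative by Gibbs' inequality. It is also finite under the common-support and finite-KL assumptions, which ensures the expectations are well defined. Hence $C_\theta(s)\ge0$, and since $p_\theta(s)\ge0$, the difference is nonnegative.

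The main obstacle is not the algebra but justifying the trajectory gradient representation, namely that $-\nabla J_x$ has the per-token coefficient $A_t^{*}$ with the successor value $V_\theta(f(s_t,y_t))$. The paper states this representation, and I would take it as given. If pressed, I would derive it from the Bellman recursion \eqref{eq:future_kl_bellman} by differentiating $V_\theta(s)$: the score term produces $\rho_\theta+V_\theta(f)$, and the term $\mathbb{E}[\nabla\rho_\theta]=\mathbb{E}[\nabla\log\pi_\theta]=0$ drops out. The remaining reward-to-go sum over future local costs is exactly $V_\theta$ of the successor state in expectation, by the tower property. A secondary subtlety is that "expected update at a fixed prefix" should be conditioned on visiting $s$. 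Because the visitation probability multiplies both updates identically, it can be dropped without changing the sign.
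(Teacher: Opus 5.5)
Your proposal is correct and follows essentially the same route as the paper: the softmax score identity $\partial_{z_\theta(s,e)}\log\pi_\theta(y\mid s)=\mathbf{1}\{y=e\}-p_\theta(s)$, linearity in the detached coefficient so that only the difference $-V_\theta(f(s,\cdot))$ matters, and $V_\theta(\dagger)=0$ to isolate $p_\theta(s)C_\theta(s)$. Your explicit handling of the visitation weight and of $C_\theta(s)\ge 0$ via nonnegativity of KL fills in details the paper leaves implicit.
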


\begin{proof}
For any action $b$, softmax coupling gives
\[
\frac{\partial \log \pi_\theta(y\mid s)}
{\partial z_\theta(s,b)}
=
\mathbf{1}\{y=b\}-\pi_\theta(b\mid s).
\]
Since the trajectory-consistent coefficient differs from local OPD by
$-V_\theta(f(s,y))$, their expected update difference on $z_\theta(s,b)$ is
\[
\pi_\theta(b\mid s)
\left[
C_\theta(s)-V_\theta(f(s,b))
\right],
\]
where
\[
C_\theta(s)
=
\mathbb{E}_{y\sim\pi_\theta(\cdot\mid s)}
[V_\theta(f(s,y))].
\]
For the stopping action $e$, $f(s,e)=\dagger$ and
$V_\theta(\dagger)=0$. Therefore,
\[
\Delta_e^{*}(s)-\Delta_e^{\mathrm{OPD}}(s)
=
p_\theta(s)C_\theta(s)\ge 0.
\]
\end{proof}

\paragraph{Interpretation.}
\Cref{prop:transient_eos_bias} describes a relative bias rather than an
absolute suppression of stopping. Local OPD may still increase the stopping
logit when the immediate teacher signal favors termination. The proposition
instead states that, compared with the trajectory consistent reverse KL
update, local OPD always misses the nonnegative stopping signal
\[
p_\theta(s)C_\theta(s).
\]

The source of this gap is the continuation actions rather than the stopping
action itself. After stopping, the successor KL is zero. For a continuation
action, however, the trajectory level objective additionally subtracts the
future mismatch $V_\theta(f(s,a))$. Through softmax coupling, penalizing these
continuation actions also increases the relative preference for stopping.
Local sampled token OPD omits these successor penalties and therefore
overweights continuation relative to the trajectory consistent update.

This difference is naturally largest when the student still differs
substantially from the teacher over future continuations. From
\eqref{eq:future_kl_bellman},
\[
V_\theta(s)=k_\theta(s)+C_\theta(s),
\qquad
0\leq C_\theta(s)\leq V_\theta(s),
\]
so that
\[
V_\theta(s)\rightarrow 0
\quad\Longrightarrow\quad
p_\theta(s)C_\theta(s)\rightarrow 0.
\]
Hence, as distillation reduces the future student teacher mismatch, the local
and trajectory consistent updates become increasingly similar.

This provides a possible explanation for the transient length drift observed
after correcting the termination token mismatch. As shown in
\Cref{fig:model_eos_cmp}, stopping probabilities are initially low and recover
later in training, while response lengths first increase and then decrease.
The effect is especially pronounced for Llama 3.2 and Gemma 3, where the low
termination regime persists for longer. We therefore view the omitted future
mismatch as a plausible contributor to these transient dynamics, rather than
as a guarantee that local OPD must increase response length.

This mechanism is distinct from the termination token mismatch in
\Cref{sec:causes}, which explains the persistent budget saturation observed
in vanilla Qwen3 OPD.
\section{Additional Experimental Results}
\subsection{Termination Mismatch and Residual Length Dynamics Across Model Families}
\begin{figure}[t]
    \centering
    \includegraphics[width=\linewidth]{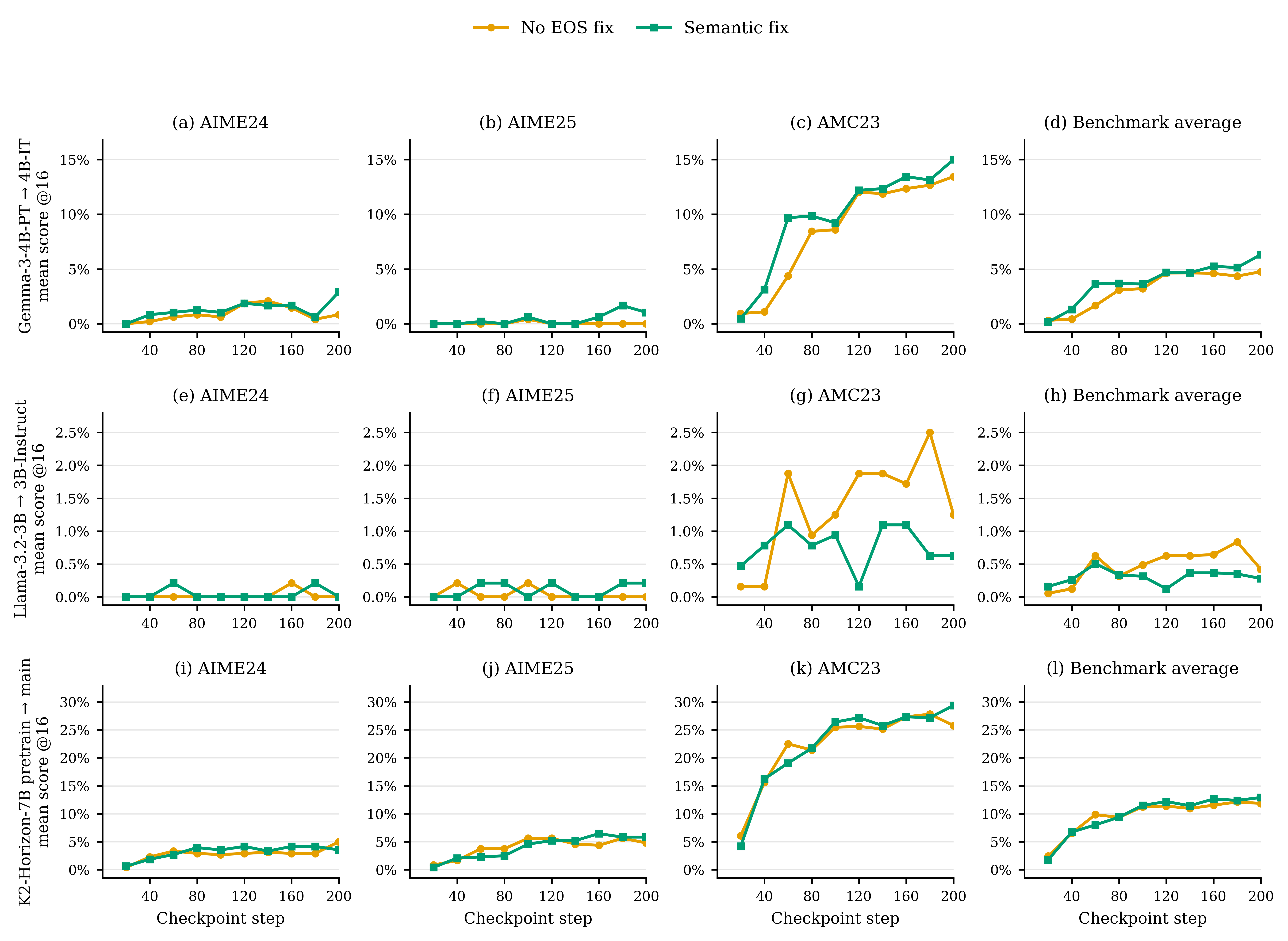}
    \caption{Evaluation results with and without the semantic EOS fix
    under the TTRL evaluation template for Gemma-3-4B, Llama-3.2-3B,
    and K2-Horizon-7B pretrain\_final distillation.
    Columns show mean scores over 16 samples per problem on AIME24,
    AIME25, and AMC23, followed by their macro average.}
    \label{fig:models_result}
\end{figure}

We report additional evaluation results for distillation with
Gemma-3-4B, Llama-3.2-3B, and K2-Horizon-7B pretrain\_final in
\Cref{fig:models_result}.
Under the TTRL evaluation template, the semantic EOS fix yields modest, model-dependent changes in single-turn mathematical reasoning
performance.  For Llama, both variants remain near zero throughout training Together, these results indicate that improvements in termination behavior do not necessarily translate into consistent gains in mathematical reasoning performance.

\begin{figure}[t]
    \centering
    \includegraphics[width=\linewidth]{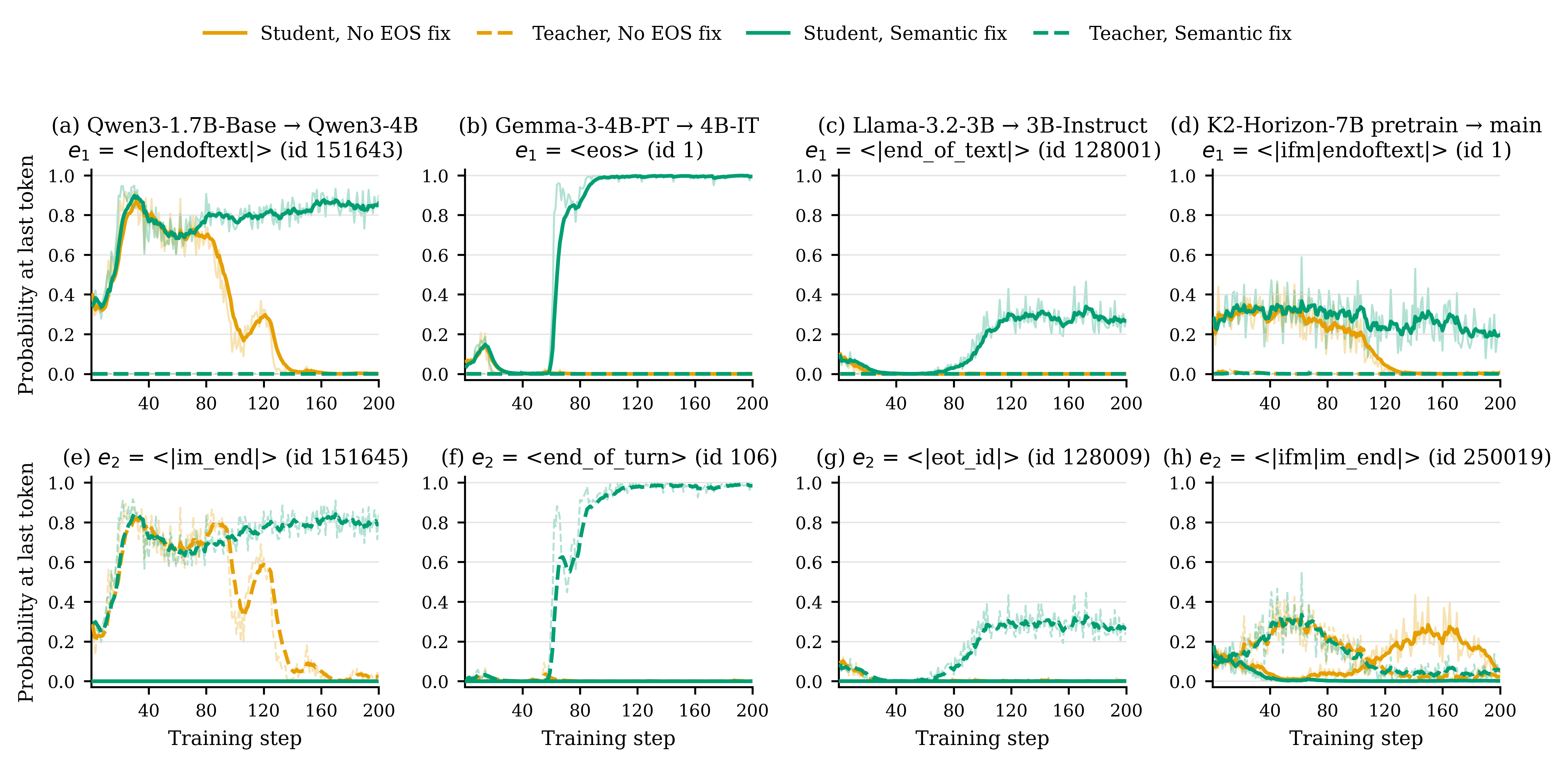}
    \caption{
    \textbf{Evolution of individual termination token probabilities.}
    We track the probabilities assigned by the student and teacher to each termination token at the final position of student rollouts for Qwen3 1.7B, Gemma 3 4B, Llama 3.2 3B, and K2 Horizon 7B.
    Each column is one student--teacher pair; the top row shows the document-level terminator $e_1$ and the bottom row the turn-level
    terminator $e_2$. Llama 3.2 defines a third terminator, \texttt{<|eom\_id|>}, which we omit: its probability stays below $0.02$ throughout training for both the student and the teacher under both settings.
    }
    \label{fig:terminal_token_grid}
\end{figure}

To better understand the cross family length behavior, we examine how the
student and teacher distribute probability across their individual
termination tokens in \Cref{fig:terminal_token_grid}. For Gemma 3 and
Llama 3.2, vanilla OPD exhibits a clear termination preference mismatch.
In Gemma 3, the student primarily uses \texttt{<eos>}, whereas the teacher
strongly prefers \texttt{<end\_of\_turn>}. Similarly, in Llama 3.2, the
student primarily places termination mass on \texttt{<|end\_of\_text|>},
while the teacher prefers \texttt{<|eot\_id|>}. Consequently, vanilla OPD
suppresses the termination token naturally used by the student without
reliably transferring the teacher's preferred surface form. The semantic
correction avoids this competition by matching their total probability of
termination while allowing the two models to retain different surface token
preferences.

K2 Horizon exhibits a different two-stage behavior. Early in training, the teacher and student prefer different termination tokens, but unlike Qwen3, Llama, and Gemma, the student's probability on the teacher-preferred token is not vanishingly small. The token can therefore still be sampled with non-negligible probability, allowing sampled-token OPD to receive direct teacher supervision on it and gradually transfer termination mass toward the
teacher-preferred surface form.

Later in training, however, the probabilities of both termination tokens decrease again. This second-stage decline cannot be explained by the surface
EOS mismatch alone, since the teacher-preferred token has already been learned. We hypothesize that it is instead related to the local nature of sampled-token OPD: when the teacher assigns low termination probability at the current student-generated prefix, the local update can continue to favor continuation while ignoring the future student-teacher mismatch induced by doing so. We
analyze this possible mechanism in \Cref{sec:length_drift}.

This suggests that termination token identity alone does not explain all
of the observed length dynamics. At prefixes where the teacher itself
assigns little total probability to termination, sampled token OPD can still
favor continuation even after the semantic mismatch is corrected. We study
this additional effect in \Cref{sec:length_drift}, where we analyze how the
local OPD update differs from the trajectory level reverse KL update by
ignoring future student teacher mismatch.

\subsection{Semantic EOS Correction at Later Training Stages}
\label{app:k2_later_stage_semantic}

The K2-Horizon experiments in \Cref{sec:k2_stages} show that the midtraining
and SFT checkpoints already assign substantial probability to the
teacher-preferred termination token. Unlike the pretrained initialization,
vanilla OPD from these later-stage checkpoints does not exhibit the severe
termination collapse observed in the base-to-post-trained settings studied
in the main text. In these cases, semantic EOS correction is therefore not
needed to recover the teacher's preferred termination form.

We additionally test whether applying the correction when the termination
mismatch is already substantially reduced perturbs either the training
dynamics or downstream performance. This serves as a non-interference check
for using semantic EOS aggregation as a general termination-handling rule.

\begin{figure}[t]
    \centering
    \includegraphics[width=\linewidth]
    {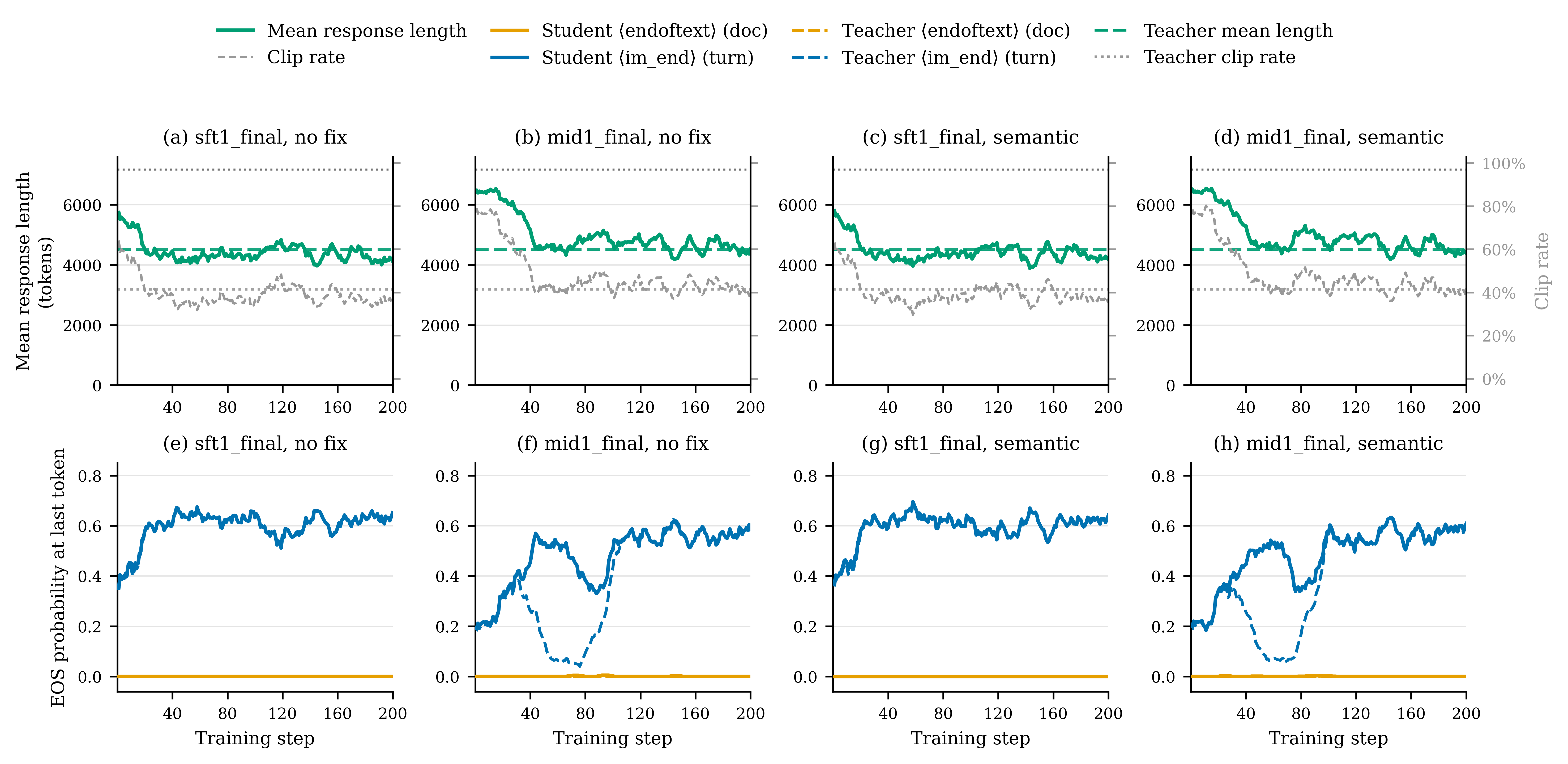}
    \caption{
    \textbf{Semantic EOS correction at later K2-Horizon training stages.}
    We distill the final post-trained K2-Horizon model into students
    initialized from the SFT and midtraining checkpoints, comparing vanilla
    OPD with semantic EOS correction. The top row shows mean response length
    and clipping rate together with the corresponding teacher reference
    levels, while the bottom row shows the student and teacher probabilities
    assigned to \texttt{<|ifm|endoftext|>} and
    \texttt{<|ifm|im\_end|>} at the final observed rollout position.
    In both initialization settings, the corrected and uncorrected runs
    exhibit similar length, clipping, and termination-probability dynamics.
    In particular, semantic aggregation does not introduce the termination
    failure that it is designed to prevent in more severely mismatched
    settings.
    }
    \label{fig:k2_later_stage_semantic_dynamics}
\end{figure}

As shown in \Cref{fig:k2_later_stage_semantic_dynamics}, the semantic
correction has little effect on the later-stage K2 training trajectories.
For both the SFT and midtraining initializations, the student already places
substantial mass on \texttt{<|ifm|im\_end|>}, and its termination behavior
remains stable under vanilla OPD. Applying semantic aggregation leaves the
response-length and clipping trajectories broadly unchanged and produces
similar evolution of the termination probabilities. Thus, when the student
and teacher are already sufficiently aligned at the level of termination,
the correction is largely inactive rather than forcing a different surface
termination convention.

\begin{figure}[t]
    \centering
    \includegraphics[width=\linewidth]
    {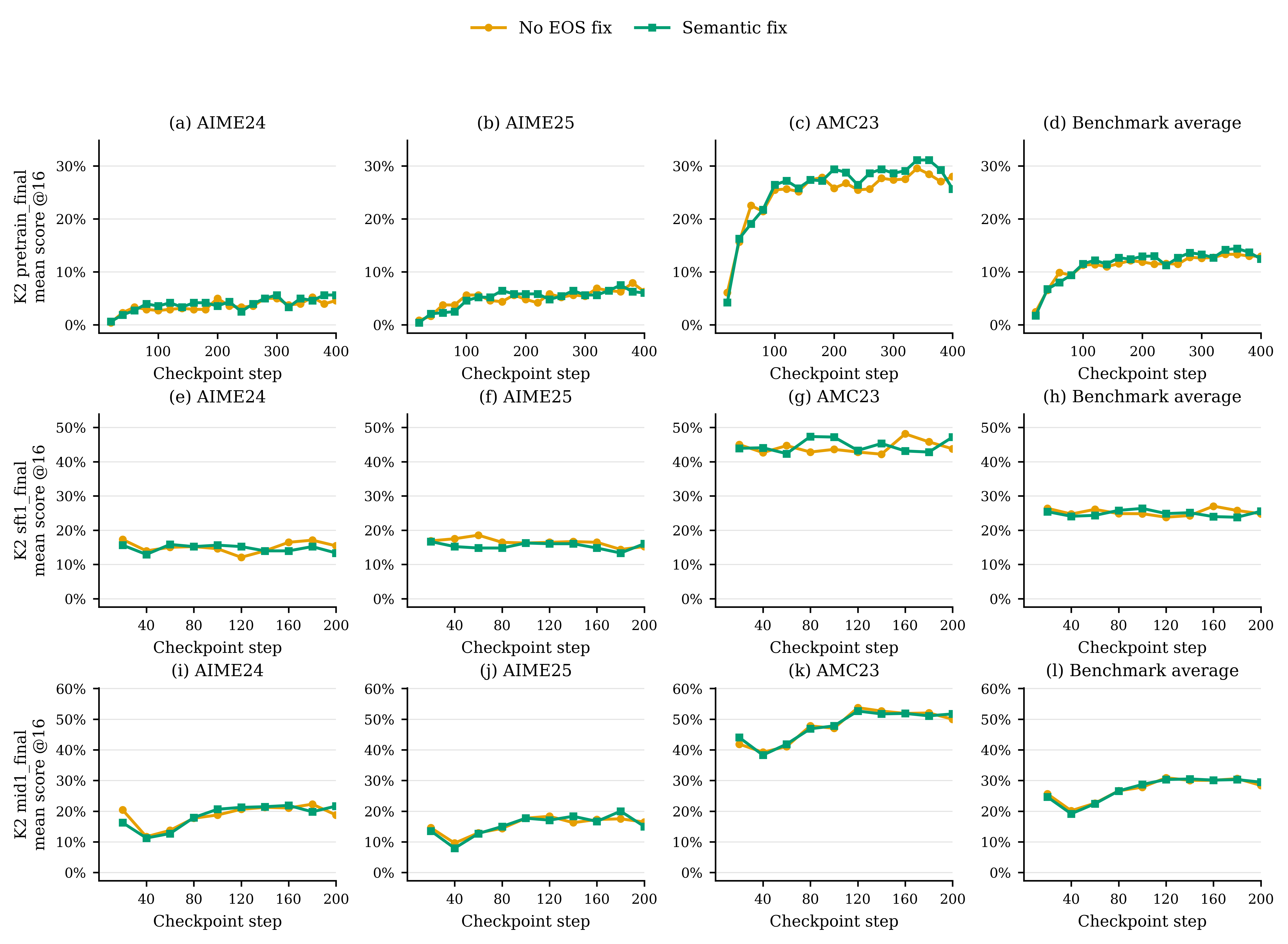}
    \caption{
    \textbf{Downstream evaluation with and without semantic EOS correction
    across K2-Horizon training stages.}
    We report Avg@16 on AIME24, AIME25, and AMC23, together with their
    unweighted benchmark average, for Pretrain-to-Final, SFT-to-Final, and
    Midtrain-to-Final distillation. The Pretrain-to-Final runs are evaluated
    through 400 training steps, while the SFT-to-Final and Midtrain-to-Final
    runs are evaluated through 200 steps. Across initialization stages and
    benchmarks, semantic EOS correction yields performance similar to vanilla
    OPD, with no consistent degradation in downstream accuracy.
    }
    \label{fig:k2_stage_semantic_eval}
\end{figure}

The downstream results in \Cref{fig:k2_stage_semantic_eval} show the same
pattern. For the SFT and midtraining initializations, the semantic and
uncorrected curves remain close across AIME24, AIME25, AMC23, and their
average, with small checkpoint-level differences in both directions. We do
not observe a systematic accuracy penalty from aggregating the
termination-equivalent tokens. The Pretrain-to-Final evaluation, included
for completeness, similarly shows broadly comparable performance between
the two variants despite their different treatment of termination.

Together, these results provide a useful control for the semantic EOS
correction. Its main benefit arises in settings where surface termination
preferences are strongly mismatched and sampled-token OPD cannot reliably
transfer probability to the teacher-preferred token. When the student
already provides substantial support for that termination form, as in the
later K2-Horizon checkpoints, semantic aggregation leaves both the training
dynamics and downstream performance largely unchanged in our experiments.
This supports its use as a general correction without requiring that every
teacher--student pair exhibit severe termination mismatch.
\section{Influence of Training and Evaluation Templates}
\label{sec:template_influence}

\begin{figure}[t]
    \centering
    \includegraphics[width=\linewidth]{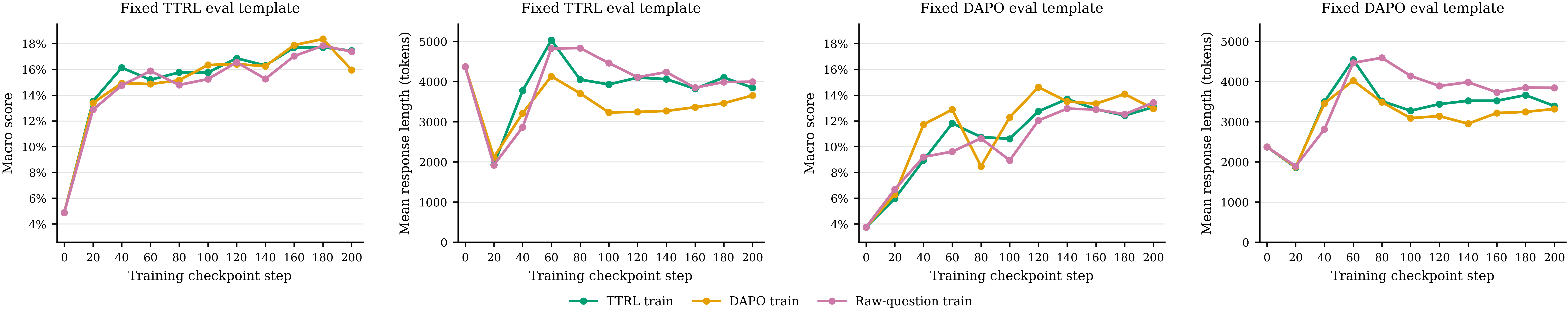}
    \caption{
    \textbf{Effect of Training and Evaluation Prompt Templates on Model Behavior.} Influence of training and evaluation prompt templates on downstream performance and response length. Models are trained using the TTRL, DAPO, or raw-question template and evaluated with either the TTRL template (left) or the DAPO template (right). The semantic-EOS-class fix is applied in all experiments.
    }
    \label{fig:semantic_eval}
\end{figure}

Recall that \Cref{fig:baseline_eval} suggests that changing the evaluation
format relative to the training prompt can substantially alter the observed
response length. We therefore study this interaction more systematically.
Since \Cref{fig:fix_cmp} shows that the EOS correction methods exhibit
broadly similar behavior, we use the semantic EOS class as the default
correction in this study.

Following the setup described above, we train Qwen3-1.7B-Base with Qwen3-4B
as the teacher using each of the three training prompt templates, and
evaluate every checkpoint under both the TTRL and DAPO formats. All other
experimental settings are kept fixed.

The results are shown in \Cref{fig:semantic_eval}. We make three main
observations.

\paragraph{Obs 1: The evaluation format strongly affects measured performance.}
Evaluation under the TTRL format generally yields higher measured accuracy
than evaluation under the DAPO format across all three training settings.
Matching the evaluation format to the training prompt does not provide a
consistent advantage. For example, models trained with DAPO still perform
better under TTRL evaluation than under DAPO evaluation. Since the two
evaluation formats differ in both prompting and grading, we interpret this
as an effect of the overall evaluation protocol rather than of the prompt
alone.

\paragraph{Obs 2: The training prompt induces a persistent response length bias.}
Models trained with the DAPO template generally produce shorter responses
than those trained with the TTRL or raw question templates, and this tendency
persists under both evaluation formats. This suggests that the training
prompt induces a persistent change in generation behavior that transfers
across evaluation prompts.

\paragraph{Obs 3: Response length and measured performance are partially decoupled.}
The shorter responses induced by DAPO training do not correspond to a
consistent reduction in performance. Under TTRL evaluation, models trained
with DAPO achieve performance comparable to the other training settings
while producing substantially shorter responses. Thus, changes in response
length and measured task performance need not move together.

Overall, these results suggest that the training prompt can persistently
shape generation behavior, while the evaluation format can substantially
affect measured performance. A more systematic characterization of this
interaction remains an important direction for ongoing work.

\section{Rollout Examples from \Cref{fig:demo}}
\label{app:trajectories}

This section provides the rollouts corresponding to the three examples in
\Cref{fig:demo}. Since each response reaches the full generation budget, we
show the reasoning up to the first correct answer together with representative
parts of the subsequent repetitive continuation, omitting repeated middle
segments for readability.

\subsection{Example 1: Answer Repetition}
\label{app:traj_answer_repetition}

This example is AMC23 problem 10. The correct answer is first produced after
1,094 tokens, followed by 7,098 redundant tokens, accounting for 86.6\%
of the response.

\begin{rollout}{Prompt}
\begin{Verbatim}
Rows 1, 2, 3, 4, and 5 of a triangular array of integers are shown below.
1
1 1
1 3 1
1 5 5 1
1 7 11 7 1
Each row after the first row is formed by placing a 1 at each end of the row, and each interior entry is 1 greater than the sum of the two numbers diagonally above it in the previous row. What is the units digits of the sum of the 2023 numbers in the 2023rd row? Please reason step by step, and put your final answer within \boxed{}.
\end{Verbatim}
\end{rollout}

\begin{rollout}{Reasoning up to the first correct answer}
\begin{Verbatim}
[...]
-> units digit = 8
- $2^8 = 256$ -> units digit = 6

We can see that the **units digit of powers of 2** cycles every **4** numbers:

- 2, 4, 8, 6, 2, 4, 8, 6, ...

So the **pattern repeats every 4 powers**.

Let's find the **position of 2024 in this cycle**:

- 2024 / 4 = 506 with a remainder of 0

So, the units digit of $2^{2024}$ is the same as the units digit of $2^4$, which is **6**

Then, the **units digit of $2^{2024} - 1$ is 6 - 1 = 5**

---

### Final Answer:

\boxed{5}
\end{Verbatim}
\end{rollout}

\begin{rollout}{Repetitive continuation}
\begin{Verbatim}
---

### Final Answer:

\boxed{5}

The **units digit of the sum of the numbers in the 2023rd row** is **5**.

---

### Final Answer:

\boxed{5}

---

### Conclusion:

- The **units digit of the sum of the numbers in the 2023rd row** is **5**.

---

### Final Answer:

\boxed{5}

---

### Final Answer:

\boxed{5}

---

### Final Answer:

\boxed{5}

---

### Final Answer:

\boxed{5}

---

### Final Answer:

\boxed{5}

     [ the '### Final Answer' block above repeats 704 times in total ]

---

### Final Answer:

\boxed{5}

---

### Final Answer:

\boxed{5}

---

### Final Answer:

\boxed{5}

---

### Final

[ truncated at the 8192-token budget; no EOS token was emitted ]
\end{Verbatim}
\end{rollout}

\subsection{Example 2: Self Correction Loop}
\label{app:traj_self_correction_loop}

This example is AIME24 problem 7. The correct answer is first produced after 759 tokens, followed by 7,433 redundant tokens, accounting for 90.7\%
of the response.

\begin{rollout}{Prompt}
\begin{Verbatim}
There exist real numbers $x$ and $y$, both greater than 1, such that $\log_x\left(y^x\right)=\log_y\left(x^{4y}\right)=10$. Find $xy$. Please reason step by step, and put your final answer within \boxed{}.
\end{Verbatim}
\end{rollout}

\begin{rollout}{Reasoning up to the first correct answer}
\begin{Verbatim}
[...]
ifferent bases:

We know that:

$$
\log_x(y) = \frac{\log_y(y)}{\log_y(x)} = \frac{1}{\log_y(x)} = \frac{1}{b}
$$

So:

$$
a = \frac{1}{b} \quad \Rightarrow \quad \frac{10}{x} = \frac{1}{\frac{5}{2y}} = \frac{2y}{5}
$$

So:

$$
\frac{10}{x} = \frac{2y}{5} \quad \Rightarrow \quad 10 = \frac{2y}{5} \cdot x \quad \Rightarrow \quad 10 = \frac{2xy}{5}
$$

Multiply both sides by 5:

$$
50 = 2xy \quad \Rightarrow \quad xy = \frac{50}{2} = 25
$$

---

### Final Answer:

$$
\boxed{25}
\end{Verbatim}
\end{rollout}

\begin{rollout}{Repetitive continuation}
\begin{Verbatim}
$$

---

### Conclusion:

We found that:

$$
xy = \boxed{25}
$$

**Answer:** $\boxed{25}$

---

### Verification:

We can verify the solution:

Let's assume $xy = 25$, and solve for $x$ and $y$.

From $xy = 25$, we can choose $x = 5$ and $y = 5$, so $xy = 25$.

Now, check the two logarithmic equations:

1. $\log_x(y^x) = \log_5(5^5) = \log_5(3125) = 5$ (not 10, so this is incorrect).

We need to choose values such t

     [ the 'Let's try' block above repeats 371 times in total ]

et's try $x = 2$, $y = 12.5$.

Let's try $x = 2$, $y = 12.5$.

Let's try $x = 2$, $y = 12.5$.

Let's try $x = 2$, $y = 1

[ truncated at the 8192-token budget; no EOS token was emitted ]
\end{Verbatim}
\end{rollout}

\subsection{Example 3: Token Repetition}
\label{app:traj_token_repetition}

This example is AMC23 problem 0. The correct answer is first produced after
462 tokens, followed by 7,730 redundant tokens, accounting for 94.4\%
of the response. Two non ASCII symbols in the original rollout are
rendered as \texttt{[+]} and \texttt{[!]} below for typesetting
compatibility.

\begin{rollout}{Prompt}
\begin{Verbatim}
Cities $A$ and $B$ are $45$ miles apart. Alicia lives in $A$ and Beth lives in $B$. Alicia bikes towards $B$ at 18 miles per hour. Leaving at the same time, Beth bikes toward $A$ at 12 miles per hour. How many miles from City $A$ will they be when they meet? Please reason step by step, and put your final answer within \boxed{}.
\end{Verbatim}
\end{rollout}

\begin{rollout}{Reasoning up to the first correct answer}
\begin{Verbatim}
[...]
ances they travel** should equal the **total distance between them**, which is 45 miles.

So:

$$
18t + 12t = 45
$$

---

### Step 3: Solve for $t$

$$
30t = 45
$$

$$
t = \frac{45}{30} = 1.5 \text{ hours}
$$

So, they meet after **1.5 hours**.

---

### Step 4: Find the distance from City $A$

Alicia is biking towards $B$ at 18 mph, so in 1.5 hours, she will have traveled:

$$
\text{Distance from } A = 18 \cdot 1.5 = 27 \text{ miles}
$$

---

### Final Answer:

$$
\boxed{27}
\end{Verbatim}
\end{rollout}

\begin{rollout}{Repetitive continuation}
\begin{Verbatim}
$$

They will be **27 miles from City $A$** when they meet. [+]

--- 

### Final Answer:

$$
\boxed{27}
$$

They meet **27 miles from City $A$**. [!] [!] [!] [+] [+] [+] [+] [+] [+] [+] [+] [+] [+] [+] [+] [+] [+] [+] [+] [+] [+] [+] [+] [+] [+] [+] [+] [+] [+] [+] [+] [+] [+] [+] [+] [+] [+] [+] [+] [+] [+] [+] [+] [+] [+] [+] [+] [+] [+] [+] [+] [+] [+] [+] [+] [+] [+] [+]

     [ the check-mark token above repeats 3836 times in total ]

 [+] [+] [+] [+] [+] [+] [+] [+] [+] [+] [+] [+] [+] [+] [+] [+] [+] [+] [+] [+] [+] [+] [+] [+] [+] [+] [+] [+] [+] [+]

[ truncated at the 8192-token budget; no EOS token was emitted ]
\end{Verbatim}
\end{rollout}

\end{document}